\documentclass{opt2025} % Include author names
\newtheorem*{theorem*}{Theorem}
\newtheorem*{lemma*}{Lemma}
\newtheorem*{proposition*}{Proposition}
\newtheorem*{corollary*}{Corollary}
\newcommand{\simfun}[2]{\mathrm{sim}\left(#1, #2\right)}
\newcommand{\simfunc}{\textrm{sim}}

\newtheorem{assumption}{Assumption}[section]

\title[Temperature Annealing in Contrastive Learning]{Asymptotic and Finite-Time Guarantees for Langevin-Based Temperature Annealing in InfoNCE}

\optauthor{%
\Name{Faris Chaudhry} \Email{faris.chaudhry22@imperial.ac.uk}\\
\addr Imperial College London, United Kingdom
}

\begin{document}

\maketitle

\begin{abstract}
The InfoNCE loss in contrastive learning depends critically on a temperature parameter, yet its dynamics under fixed versus annealed schedules remain poorly understood. We provide a theoretical analysis by modeling embedding evolution under Langevin dynamics on a compact Riemannian manifold. Under mild smoothness and energy-barrier assumptions, we show that classical simulated annealing guarantees extend to this setting: slow logarithmic inverse-temperature schedules ensure convergence in probability to a set of globally optimal representations, while faster schedules risk becoming trapped in suboptimal minima. Our results establish a link between contrastive learning and simulated annealing, providing a principled basis for understanding and tuning temperature schedules.
\end{abstract}

\section{Introduction}
\label{sec:introduction}

Contrastive learning with the Information Noise-Contrastive Estimation (InfoNCE) loss has become a cornerstone of modern representation learning~\cite{vanderOord2018, gutmann2010}. It encourages an anchor embedding \(z\) to be close to its positive embedding \(z^+\) (e.g., an augmented view of the same data point) while repelling negative embeddings \(z^-\) (e.g., embeddings of other data points). The method's performance is critically dependent on an inverse temperature parameter $\beta$ which modulates the sharpness of the similarity distribution in the loss function for an anchor $z_i$ and its positive sample $z_j$:
\begin{equation}
    \label{eq:InfoNCE-loss-standard}
    \ell_{i,j} = - \log \frac{\exp\left(\beta\,\mathrm{sim}\left(z_i, z_j\right) \right)}{\sum_{k\neq i} \exp\left(\beta\,\mathrm{sim}\left(z_i, z_k\right) \right)},
\end{equation}
where \(\mathrm{sim}(\cdot,\cdot)\) is a similarity function (such as cosine similarity). The overall contrastive loss $\mathcal{L}$ is then defined as an average over a set of positive embedding pairs $\mathcal{P}$:
\begin{equation}
    \label{eq:InfoNCE-overall-loss}
    \mathcal{L} = \frac{1}{|\mathcal{P}|} \sum_{(i,j) \in \mathcal{P}} \ell_{i,j}.
\end{equation}
This objective formalizes the representation learning problem as a geometric alignment task: the goal is to learn an embedding space where semantic similarity in the input domain is preserved as metric closeness on some manifold, while distinct data points are separated to prevent collapse.

Modern self-supervised learning relies heavily on instance-discrimination frameworks like SimCLR~\cite{chen2020}, MoCo~\cite{he2020}, and BYOL~\cite{grill2020}, where the temperature parameter plays a major role in shaping the loss landscape. A fixed temperature often proves inadequate: low temperatures can induce dimensional collapse~\cite{jing2022understanding}, while high temperatures degrade uniformity and discriminative power~\cite{wang2020understanding}. Recognizing this, recent empirical studies have proposed heuristic adaptive schedules, such as individualized temperatures for different semantic classes~\cite{qiu2023}, schedules for long-tailed data~\cite{kukleva2023}, or even temperature-free losses~\cite{kim2025}.

Despite these empirical advances, a theoretical grounding for temperature scheduling remains limited. Prior theory has largely focused on generalization bounds under fixed temperatures~\cite{nikunj2019} or the geometric role of temperature as a sharpness parameter~\cite{wang2022chaos, wang2021understanding}. Our work bridges this gap by connecting contrastive learning to the formal guarantees of simulated annealing~\cite{geman1984, kirkpatrick1983, hajek1988, chiang1987diffusion} and viewing stochastic gradient descent (SGD) as a Langevin process~\cite{mandt2017, Shi2023schrodinger}. 

Overall, we establish a formal connection between InfoNCE and classical simulated annealing. We model the evolution of embeddings as a Langevin stochastic differential equation (SDE) on a compact Riemannian manifold, with the InfoNCE objective acting as a time-varying potential. We prove asymptotic global convergence for a logarithmic annealing schedule, $\beta(t) = c \ln (t+K)$, under standard smoothness assumptions. We further derive a finite-time convergence rate for this process to provide a bound on the probability of error. Finally, we provide a geometric analysis of the InfoNCE Hessian, showing that annealing sharpens the loss landscape, which stabilizes the later phases of convergence. In doing so, our work helps to contextualize the heuristic practice of temperature scheduling to its theoretical foundations~\cite{rusak2024}.

\section{Theoretical Framework}
\label{sec:theoretical-framework}

\subsection{InfoNCE as a Gibbs Free Energy}

The InfoNCE loss can be interpreted as the negative log-likelihood of sampling a positive pair from a local Gibbs distribution. For each anchor embedding $z_i$, we define an energy $E_i(k) = -\simfunc(z_i, z_k)$ for any candidate sample $z_k$. The local partition function for anchor \(i\) over its candidates is \(Z_i(\beta)=\sum_{k\neq i}\exp(-\beta\,E_i(k))\), yielding a probability distribution over candidates:
\[
  p_i(k\mid\beta)=\frac{\exp(-\beta\,E_i(k))}{Z_i(\beta)} = \frac{\exp(\beta\,\mathrm{sim}(z_i,z_k))}{\sum_{l\neq i}\exp(\beta\,\mathrm{sim}(z_i,z_l))}.
\]
This is precisely a Gibbs-Boltzmann distribution. The InfoNCE loss term \(\ell_{i,j}\) is the negative log-probability of selecting the positive sample \(j\), since \(\exp \left( -\ell_{i,j} \right)=p_i(j\mid\beta)\). As \(\beta\to\infty\), this distribution concentrates sharply on the candidate(s) \(k^*\) maximizing similarity (minimizing energy) with \(z_i\). This connection is clearest when viewing the scaled loss, which takes a form analogous to the Helmholtz free energy ($F=E - TS$):
\[
  \frac{\ell_{i,j}}{\beta}
  \;=\; \underbrace{- \,\mathrm{sim}(z_i,z_j)}_{\substack{\text{Energy } E_i(j)}}
  \;+\; \underbrace{\frac1\beta\log Z_i(\beta)}_{\substack{\text{Entropic/Log-Partition} \\ \approx -\text{Temperature} \times \text{Entropy}}}.
\]
The first term minimizes energy by pulling the positive sample \(z_j\) closer, while the second term penalizes configurations where \(z_i\) is highly similar to many negatives.

\subsection{The Langevin Model, Key Assumptions, and the Limiting Potential}

This free-energy perspective motivates modeling the embedding evolution as a Langevin process. We model the dynamics of the full set of embeddings $Z_t$ on a manifold $\mathcal{M}$ using the overdamped Langevin SDE, which approximates the behavior of SGD with isotropic noise scaled by $\beta(t)$:
\begin{equation}
\label{eq:manifold-langevin-main}
\mathrm{d}Z_t = -\mathrm{grad}\,\mathcal{L}(Z_t,\beta(t))\,\mathrm{d}t + \sqrt{2/\beta(t)}\,\mathrm{d}\mathbf{W}_t^{\mathcal{M}},
\end{equation}
where $\mathrm{grad}$ is the Riemannian gradient and $\beta(t)$ is the time-varying inverse temperature schedule. This SDE models the exploration-exploitation dynamics inherent in annealing as $\beta \to \infty$.

The convergence of this process relies on the following standard assumptions (formalized in Appendix~\ref{sec:proof-assumptions}, where we also discuss their validity in more detail).

\paragraph{\ref{ass:dynamics}--\ref{ass:manifold} (Dynamics on a Compact Manifold):} The dynamics follow the SDE in~\eqref{eq:manifold-langevin-main} on a compact, connected Riemannian manifold $\mathcal{M}$. Compactness, often achieved in practice via $\ell_2$-normalization, is a key property that guarantees energy barriers are finite.

\paragraph{\ref{ass:smoothness}--\ref{ass:limit-potential} (Well-Posed Limiting Potential):} The similarity function is smooth and bounded, ensuring the loss $\mathcal{L}$ converges to a well-defined limiting potential $U_0(Z)$ as $\beta \to \infty$. This holds for the cosine and Gaussian similarity functions, for example.

\paragraph{\ref{ass:barriers}--\ref{ass:schedule} (Annealing Conditions):} The potential $U_0(Z)$ has a finite critical energy barrier constant $c^* = 1/\Delta E_{\max}$ where $\Delta E_{\max}$ is the largest energy barrier of the loss landscape. Further, the annealing schedule is logarithmic, $\beta(t) = c \ln (t+K)$, with a rate $c$ that is sufficiently slow ($0 < c \le c^*$) and some constant $K>0$.

For InfoNCE, the limiting potential from~\ref{ass:limit-potential} is defined as:
\begin{equation}
U_0(Z) = \frac{1}{|\mathcal{P}|} \sum_{(i, j) \in \mathcal{P}} \left[ -\simfun{z_i}{z_j} + \max_{k \neq i} \simfun{z_i}{z_k} \right].
\end{equation}
The global minimizers of this potential correspond to the ideal state of the representations when $\beta \to \infty$, as formalized in the following proposition.
\begin{proposition}[Characterization of Global Minima]
A configuration $Z^*$ is a global minimizer of the limiting potential $U_0(Z)$ if and only if for every positive pair $(i, j)$, the maximum possible similarity is achieved.
\end{proposition}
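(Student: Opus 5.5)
The plan is to read the statement as follows: a configuration is a global minimizer of $U_0$ exactly when each summand attains its smallest possible value. I interpret ``the maximum possible similarity is achieved'' to mean that $\simfun{z_i}{z_j} = \max_{k\neq i}\simfun{z_i}{z_k}$ for every $(i,j)\in\mathcal{P}$. In words, the positive $z_j$ is a most-similar candidate to the anchor $z_i$, possibly tied with others.

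\textbf{Pointwise lower bound.} Define $g_{ij}(Z) = -\simfun{z_i}{z_j} + \max_{k\neq i}\simfun{z_i}{z_k}$. Since $j\neq i$ is one of the indices in the maximum, we have $\max_{k\neq i}\simfun{z_i}{z_k}\ge \simfun{z_i}{z_j}$, so $g_{ij}(Z)\ge 0$ for every $Z\in\mathcal{M}$. Hence $U_0(Z)\ge 0$, with equality if and only if $g_{ij}(Z)=0$ for all $(i,j)\in\mathcal{P}$, because a sum of nonnegative terms vanishes only when each term does. The condition $g_{ij}=0$ is exactly the stated property.

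\textbf{Attainability.} To conclude that ``global minimizer'' is equivalent to $U_0=0$, I must show that $\inf_{\mathcal{M}} U_0 = 0$ is attained. I will do this by constructing an explicit configuration in which every positive pair attains the maximum. The construction takes the equivalence classes generated by $\mathcal{P}$ (the connected components of the positive-pair graph) and places all embeddings in a class at a single point. Under cosine similarity, coincident unit vectors give $\simfunc=1$, which is the global maximum of the similarity, so $\simfun{z_i}{z_j}=1\ge\simfun{z_i}{z_k}$ for all $k$. The same works for Gaussian similarity, which is maximal at zero distance. More generally, the boundedness and smoothness in \ref{ass:smoothness} mean the similarity attains a maximum $s_{\max}$, and I will assume, as for these examples, that $\simfun{z}{z}=s_{\max}$. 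This requires the manifold $\mathcal{M}$ to permit coincident embeddings, which holds for a product of spheres.

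\textbf{Main obstacle.} The delicate step is attainability. If $\mathcal{M}$ or the similarity does not allow every pair to reach the maximum at once, the minimum could be strictly positive and the characterization would need restating. Compactness (\ref{ass:manifold}) together with continuity of $U_0$ (a maximum of finitely many continuous functions) guarantees that some minimizer exists. However, it does not guarantee that the minimum value is $0$, so I will rely on the explicit collapsed-class construction above. With attainability in hand, both directions follow: any minimizer has $U_0=0$, so every $g_{ij}$ vanishes; and any configuration with all $g_{ij}=0$ has $U_0=0=\min U_0$.
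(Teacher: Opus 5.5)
Your argument is correct, and its core is the same as the paper's: each summand $g_{ij}\ge 0$, so $U_0=0$ exactly when every summand vanishes. It differs from the paper in two places, both in your favour.

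First, the paper does not stop at $\mathrm{sim}(z_i,z_j)=\max_{k\neq i}\mathrm{sim}(z_i,z_k)$. It goes on to conclude $\mathrm{sim}(z_i^*,z_j^*)=S_{\max}$, citing only the bound $\mathrm{sim}\le S_{\max}$, and that step does not follow. Take cosine similarity on $S^1$ with $\mathcal{P}=\{(1,2),(2,1)\}$, $z_1=(1,0)$, $z_2=(0,1)$ and $z_3=-(1,1)/\sqrt2$. This gives $U_0=0$ while $\mathrm{sim}(z_1,z_2)=0<1$. Your reading of ``maximum possible similarity'' as ``$z_j$ is a most-similar candidate for $z_i$'' is therefore the one under which the proposition is actually true.

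Second, the paper asserts that the minimum value of $U_0$ is $0$ without justification. This is exactly the attainability gap you flag for the forward direction. Your fix works, but it is heavier than needed. Since $\mathcal{M}=(\mathcal{M}_0)^N$ by Assumption~\ref{ass:manifold}, the fully collapsed configuration $z_1=\dots=z_N$ always exists. It makes all similarities equal, so every $g_{ij}=0$ for an arbitrary similarity function. You can therefore drop the extra hypothesis $\mathrm{sim}(z,z)=s_{\max}$ and the component-wise construction. Note in passing that this also shows $U^*$ always contains the collapsed state.
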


\section{Main Convergence Guarantees}
\label{sec:main-results}

With the theoretical framework established, we now present our main results. We provide an asymptotic guarantee of global convergence for a slow annealing schedule, a matching non-convergence result for fast schedules, and a finite-time bound on the convergence rate.

\subsection{Equilibrium at Fixed Temperature}

Before considering a time-varying schedule, we recall the behavior of the SDE in~\eqref{eq:manifold-langevin-main} for a fixed inverse temperature $\beta > 0$. In this time-homogeneous setting, the process is ergodic and converges to a unique stationary distribution.
\begin{proposition}[Stationary Distribution at Fixed Temperature]
\label{prop:stationary-dist}
For any fixed $\beta > 0$, under Assumptions \ref{ass:dynamics}-\ref{ass:smoothness}, the SDE admits a unique stationary Gibbs-Boltzmann distribution on $\mathcal{M}$:
\begin{equation}
    \pi_\beta(\mathrm{d}Z)
    \;=\; \frac{1}{\mathcal{Z}_\beta}
    \exp\!\bigl[-\,\beta\,\mathcal{L}(Z,\beta)\bigr]\,d\mu(Z).
\end{equation}
\end{proposition}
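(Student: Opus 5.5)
For fixed $\beta>0$ the SDE~\eqref{eq:manifold-langevin-main} is a time-homogeneous diffusion on the compact, connected Riemannian manifold $\mathcal{M}$, with potential $V_\beta(Z) := \mathcal{L}(Z,\beta)$, smooth in $Z$ by Assumption~\ref{ass:smoothness}. Its generator is
\[
\mathcal{A}_\beta f = -\langle \mathrm{grad}\, V_\beta, \mathrm{grad} f\rangle + \beta^{-1}\Delta_{\mathcal{M}} f ,
\]
where $\Delta_{\mathcal{M}}$ is the Laplace--Beltrami operator and $\mu$ is the Riemannian volume measure. The proof has two parts: verify that $\pi_\beta \propto \exp(-\beta V_\beta)\,\mu$ is invariant, and show that the invariant measure is unique.

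\emph{Step 1 (well-definedness).} Since $\mathcal{M}$ is compact and $V_\beta$ is continuous, $V_\beta$ is bounded. Hence $0<\mathcal{Z}_\beta=\int_{\mathcal{M}} e^{-\beta V_\beta}\,d\mu<\infty$, so $\pi_\beta$ is a probability measure with a smooth, strictly positive density.

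\emph{Step 2 (invariance).} Write $\rho = e^{-\beta V_\beta}$. The forward (Fokker--Planck) operator acting on densities is
\[
\mathcal{A}_\beta^{*} p = \mathrm{div}\bigl(p\,\mathrm{grad}\,V_\beta\bigr) + \beta^{-1}\Delta_{\mathcal{M}} p = \beta^{-1}\,\mathrm{div}\bigl(\rho\,\mathrm{grad}(p/\rho)\bigr).
\]
At $p=\rho$ this vanishes, because $\beta^{-1}\mathrm{grad}\,\rho = -\rho\,\mathrm{grad}\,V_\beta$. Equivalently, integrate by parts on the closed manifold: for smooth $f,g$,
\[
\int_{\mathcal{M}} g\,\mathcal{A}_\beta f\,d\pi_\beta = -\beta^{-1}\int_{\mathcal{M}} \langle \mathrm{grad} f,\mathrm{grad} g\rangle\,d\pi_\beta .
\]
There are no boundary terms since $\partial\mathcal{M}=\emptyset$. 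Taking $g\equiv 1$ gives $\int \mathcal{A}_\beta f\,d\pi_\beta=0$, which is invariance; the symmetric form shows in addition that $\pi_\beta$ is reversible.

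\emph{Step 3 (uniqueness).} This is the main obstacle, and I would handle it through ellipticity plus connectedness. The diffusion coefficient $\sqrt{2/\beta}$ is nondegenerate, so $\mathcal{A}_\beta$ is uniformly elliptic with smooth coefficients. Elliptic regularity (Weyl's lemma) then implies that any invariant probability measure $\nu$, being a distributional solution of $\mathcal{A}_\beta^{*}\nu=0$, has a smooth density $p$.

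From the divergence form in Step 2, multiply $\mathcal{A}_\beta^{*}p=0$ by $p/\rho$ and integrate by parts:
\[
0 = -\beta^{-1}\int_{\mathcal{M}} \rho\,\bigl|\mathrm{grad}(p/\rho)\bigr|^2 \,d\mu .
\]
Since $\rho>0$, this forces $\mathrm{grad}(p/\rho)=0$. Connectedness of $\mathcal{M}$ then gives $p/\rho$ constant, and normalization yields $p=\rho/\mathcal{Z}_\beta$, so $\nu=\pi_\beta$.

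As an alternative route, one can argue via a strictly positive smooth heat kernel. The transition kernel of a nondegenerate diffusion on a compact connected manifold is strictly positive, which makes the process strong Feller and irreducible. Doob's theorem then gives uniqueness and ergodicity, and a Doeblin minorization on compact $\mathcal{M}$ even yields convergence in total variation.

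The point requiring care is that $\mathcal{L}(\cdot,\beta)$ must be $C^2$ uniformly on $\mathcal{M}$, so that the SDE has a unique non-explosive strong solution and the regularity theory applies. This follows from the smoothness and boundedness of $\mathrm{sim}$ and the smoothness of log-sum-exp. Existence of the solution for all time is automatic by compactness.
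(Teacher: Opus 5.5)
Your argument is correct, but it reaches uniqueness by a different route from the paper's.

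The paper argues abstractly. A uniformly elliptic diffusion on a compact connected manifold without boundary is strong Feller and topologically irreducible, so it has exactly one invariant probability measure. The paper then identifies that measure as $\exp(-\beta\mathcal{L})\,d\mu$ by citing the detailed-balance property of gradient Langevin dynamics, with $\mathcal{Z}_\beta<\infty$ by compactness.

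You instead verify invariance and reversibility directly, by integrating by parts against $\pi_\beta$: the generator is symmetric in $L^2(\pi_\beta)$ with Dirichlet form $\beta^{-1}\int|\mathrm{grad} f|^2\,d\pi_\beta$. You then get uniqueness from $\int_{\mathcal{M}}\rho\,|\mathrm{grad}(p/\rho)|^2\,d\mu=0$ together with connectedness. So a single computation shows that every invariant density is a constant multiple of $\rho$, and the strong Feller/Doob argument serves only as your fallback.

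What each route buys:
\begin{itemize}
\item Your route is self-contained and makes explicit the identification step that the paper only cites. It does rely on the gradient (reversible) structure of the drift and on a priori regularity of an arbitrary invariant measure.
\item The paper's route works for any nondegenerate drift, but it does not by itself identify the measure.
\end{itemize}

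One small correction. Assumption~\ref{ass:smoothness} only gives $\mathrm{sim}\in C^2$, so the drift is $C^1$ rather than smooth, and Weyl's lemma does not apply verbatim. Cite instead regularity results for stationary Fokker--Planck equations with bounded drift (e.g.\ Bogachev--Krylov--R\"ockner), which give a density in $W^{1,q}$ for every finite $q$. Note also that $p/\rho\in H^1$ is all your energy identity needs.
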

This result formalizes the connection to statistical physics and the fundamental trade-off for any fixed temperature: for any finite $\beta$, there is a non-zero probability of being in a suboptimal state, while a very large $\beta$ risks trapping the dynamics, as the noise term $\sqrt{2/\beta}$ vanishes. This motivates the use of a time-varying annealing schedule to navigate this trade-off. As $\beta \to \infty$, this distribution concentrates its mass on the global minimizers of the potential $\mathcal{L}(Z,\beta)$ (or its limiting form $U_0(Z)$), as formally characterized in Appendix~\ref{subsec:stationary-dist-at-fixed}.

\subsection{Asymptotic Global Convergence}

Simulated annealing leverages this concentration property by slowly increasing $\beta(t)$. If $\beta(t)$ increases slowly enough, the system can escape local minima while the temperature is high ($\beta$ is small) and then ``freeze'' into the global minimum as the temperature drops ($\beta \to \infty$). 

Classical simulated annealing theory provides precise conditions for how slow the rate of increasing $\beta(t)$ must be for guaranteed convergence. Adapting these results to our specific time-inhomogeneous SDE~\eqref{eq:manifold-langevin-main} yields the following key results.

\begin{theorem}[Global Convergence for Logarithmic Annealing]
Under Assumptions \ref{ass:dynamics}--\ref{ass:schedule}, 
the process $Z_t$ defined by Eq.~\eqref{eq:manifold-langevin-main}
converges in probability to the set $U^*$ of global minimizers of $U_0(Z)$:
\[
\lim_{t \to \infty} \mathbb{P}\big(Z_t \in \mathcal{N}(U^*, \epsilon)\big) = 1, \quad \forall \epsilon>0.
\]
\end{theorem}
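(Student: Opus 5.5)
Our plan is to reduce the statement to the classical convergence theorem for continuous-time simulated annealing on a compact manifold (Holley--Kusuoka--Stroock; Chiang--Hwang--Sheu). The one new ingredient is that the drift is the gradient of a \emph{time-dependent} potential $\mathcal{L}(\cdot,\beta(t))$ rather than a fixed energy. The natural potential for the Gibbs measure in Proposition~\ref{prop:stationary-dist} is $V_\beta(Z) := \mathcal{L}(Z,\beta)$, so $\pi_\beta \propto e^{-\beta V_\beta}$.

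First we control $V_\beta - U_0$. By log-sum-exp bounds, for each anchor
\[
\max_{k\neq i}\mathrm{sim}(z_i,z_k) \le \tfrac1\beta \log Z_i(\beta) \le \max_{k\neq i}\mathrm{sim}(z_i,z_k) + \tfrac{\log(N-1)}{\beta}.
\]
Hence the scaled potential satisfies $\sup_Z |V_\beta(Z) - U_0(Z)| \le C/\beta$ uniformly, using the boundedness and smoothness assumptions~\ref{ass:smoothness}--\ref{ass:limit-potential}. Consequently $\beta V_\beta = \beta U_0 + O(1)$, and $\pi_\beta$ differs from the Gibbs measure $\propto e^{-\beta U_0}$ by a density ratio bounded above and below uniformly in $\beta$. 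Combined with compactness (\ref{ass:manifold}), a Laplace-type argument gives $\pi_\beta(\mathcal{N}(U^*,\epsilon)) \to 1$ as $\beta\to\infty$. We also record that $\partial_\beta V_\beta$ is bounded by $C'/\beta^2$, with gradient bounded uniformly, so $\frac{d}{dt} V_{\beta(t)} = O(\dot\beta/\beta^2) = O(1/(t\ln^2 t))$. The same bounds hold for $\nabla_Z$ of this quantity; if the scaling convention of the SDE is the unscaled $\mathcal{L}$, we absorb the factor into the time change.

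Second, we run the standard $L^2$ (or relative-entropy) argument. Let $f_t = d\mathrm{Law}(Z_t)/d\pi_{\beta(t)}$ and differentiate $\|f_t - 1\|^2_{L^2(\pi_{\beta(t)})}$. This produces two terms. The first is a dissipation term $-2\,\mathcal{E}_{\beta(t)}(f_t)$, bounded using the spectral gap of the generator at inverse temperature $\beta$. The second is a source term from $\partial_t \pi_{\beta(t)}$, of size $\dot\beta\,\sup|\partial_\beta(\beta V_\beta)| \lesssim \dot\beta(\|U_0\|_\infty + C)$. The key input is the low-temperature spectral gap estimate on a compact manifold,
\[
\lambda(\beta) \ge c_0\, e^{-\beta\,(m + \delta)},
\]
where $m$ is the critical depth (largest energy barrier $\Delta E_{\max}$ of $U_0$, Assumption~\ref{ass:barriers}). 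Because $\beta V_\beta$ is an $O(1)$ perturbation of $\beta U_0$, Holley--Stroock perturbation transfers the gap from $U_0$ to $V_\beta$ at the cost of a constant factor. With $\beta(t) = c\ln(t+K)$ this gives $\lambda(\beta(t)) \gtrsim (t+K)^{-c(m+\delta)}$. This is non-integrable, i.e.\ $\int^\infty \lambda = \infty$, whenever $c(m+\delta) \le 1$, which is ensured by $c \le c^* = 1/m$ after taking $\delta$ small (or using the sharp Eyring--Kramers form at $c=c^*$ exactly). Meanwhile the source term $\dot\beta \sim c/t$ is dominated by $\lambda$, since $\dot\beta/\lambda \to 0$ when $c\,m<1$. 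A Gronwall/ODE comparison then gives $\|f_t - 1\|_{L^2(\pi_{\beta(t)})} \to 0$, hence $\|\mathrm{Law}(Z_t) - \pi_{\beta(t)}\|_{TV}\to 0$. Combining this with the first step's concentration of $\pi_{\beta(t)}$ yields the claim. The initial $L^2$ norm is finite after an arbitrarily short time by hypoellipticity/heat-kernel bounds on compact $\mathcal{M}$.

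The main obstacle is the spectral gap lower bound with the correct exponent $\Delta E_{\max}$. This is delicate because $U_0$ involves a $\max$ and is only Lipschitz, not smooth. We would therefore prove the gap for the smooth potentials $V_\beta$ directly via a path/canonical-flow argument (Holley--Kusuoka--Stroock), which only needs continuity of the potential and yields exponent $\sup$ over saddle heights. We then use the uniform $O(1/\beta)$ approximation so that the barrier of $V_\beta$ converges to that of $U_0$. The boundary case $c=c^*$ is the most fragile, and we would fall back on the $\epsilon$-margin version if the sharp constant does not close.
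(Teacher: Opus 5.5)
The gap is the normalization of the potential, and it sits in your sentence ``we absorb the factor into the time change.'' You correctly record that Eq.~\eqref{eq:manifold-langevin-main} has invariant law $\pi_\beta\propto e^{-\beta\mathcal{L}}$. However, every estimate you then use is for the scaled loss $F_\beta=\mathcal{L}/\beta$. It is $F_\beta$, not $\mathcal{L}$, that lies within $O(1/\beta)$ of $U_0$ and satisfies $\partial_\beta F_\beta=O(\beta^{-2})$. Moreover, $e^{-\beta F_\beta}=e^{-\mathcal{L}}$ is not the invariant law of that SDE. Since $\mathcal{L}=\beta U_0+O(1)$, the true exponent is $\beta\mathcal{L}=\beta^2U_0+O(\beta)$.

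A time change cannot remove the extra $\beta$, because it rescales drift and diffusion by the same factor. With $s=\int_0^t\beta(u)\,du$ the SDE becomes $dY_s=-\mathrm{grad}\,F_\beta(Y_s)\,ds+\sqrt{2/\beta^2}\,d\tilde{\mathbf{W}}_s$, i.e.\ annealing of $F_\beta\approx U_0$ at inverse temperature $\beta^2$. The spectral gap you need is then $\beta e^{-\beta^2(m+o(1))}$, and the Holley--Stroock comparison with $U_0$ costs $e^{O(\beta)}$, not a constant. For $\beta(t)=c\ln(t+K)$ this gap is integrable in $t$ for every $c>0$, so the Gronwall step cannot close.

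In fact, the statement itself fails for Eq.~\eqref{eq:manifold-langevin-main} as written. Run the escape-rate argument the paper uses for Proposition~\ref{prop:non-convergence} with the correct rate $e^{-\beta(t)^2\Delta E_{\max}}$ (up to factors polynomial in $\beta$). It shows the process stays in a suboptimal basin with positive probability for every $c>0$. What your argument does establish, for $c<c^*$, is the result in two settings. The first is the rescaled dynamics $dZ_t=-\mathrm{grad}\,(\mathcal{L}/\beta)\,dt+\sqrt{2/\beta}\,d\mathbf{W}^{\mathcal{M}}_t$, whose invariant law $e^{-\mathcal{L}}$ is exactly what your bounds describe. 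The second is the SDE as written under $\beta(t)^2=c\ln(t+K)$, once you absorb the perturbation cost $e^{O(\beta)}=(t+K)^{o(1)}$.

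The paper's proof makes the same tacit substitution, so you cannot lean on it here. Its Part~1 writes the quasi-stationary law as $e^{-\beta F_\beta}=e^{-\mathcal{L}}$, contradicting Proposition~\ref{prop:stationary-dist}. Its Part~2 takes the escape time to be $e^{\beta\Delta E}$ rather than $e^{\beta^2\Delta E}$.

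Granting the rescaled dynamics, your route is genuinely different from and more complete than the paper's. The paper gets concentration of $\pi_{\beta(t)}$ from $\Gamma$-convergence. It then combines Eyring--Kramers heuristics with citations to Hajek and Chiang--Hwang--Sheu, and simply asserts that the law of $Z_t$ tracks $\pi_{\beta(t)}$. You replace $\Gamma$-convergence by a direct Laplace bound. You then actually prove the tracking through the Holley--Kusuoka--Stroock $L^2$ differential inequality. The time dependence of $\mathcal{L}(\cdot,\beta)$ enters as an explicit $\dot\beta$ source term, and the nonsmoothness of $U_0$ is handled by perturbation.

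One further hole remains. The statement includes $c=c^*$, where a gap of order $e^{-\beta m}$ leaves $\dot\beta/\lambda$ bounded but not vanishing. So the Gronwall step as you set it up need not close, and your $\epsilon$-margin fallback does not prove the theorem in that case.
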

Here, $\mathcal{N}(U^*, \epsilon) = \{ Z \in \mathcal{M} \mid \inf_{Y \in U^*} d(Z, Y) < \epsilon \}$ denotes an $\epsilon$-neighborhood of the set $U^*$ under the Riemannian metric $d(\cdot,\cdot)$ of the manifold $\mathcal{M}$. 

The proof combines tools from variational convergence and large deviations~\cite{braides2002, freidlin1984random, dembo2010large, catoni1991sharp}. First, we use $\Gamma$-convergence to show that the sequence of quasi-stationary Gibbs measures associated with the annealed SDE converges weakly to the uniform measure on the global minimizer set $U^*$. This follows from the uniform convergence of the scaled loss $F_\beta(Z)$ to the limiting potential $U_0(Z)$ (Lemma~\ref{subsec:uniform-convergence}) and the compactness of the manifold. Second, we apply Freidlin--Wentzell exit-time theory~\cite{freidlin1984random} and Hajek's classical annealing condition~\cite{hajek1988, chiang1987diffusion} to show that a logarithmic schedule with $c \leq c^*$ is precisely slow enough to ensure escape from all suboptimal basins. Together, these results imply that the law of $Z_t$ converges weakly to the uniform measure on $U^*$, which is equivalent to convergence in probability.

This theorem provides the central guarantee: a sufficiently slow logarithmic inverse temperature schedule ensures the SDE dynamics find the optimal configuration corresponding to perfect contrastive separation. Conversely, annealing inverse temperature too quickly violates the conditions needed to guarantee escape from all local minima.

\begin{proposition}[Non-Convergence for Rapid Annealing]
\label{prop:non-convergence} 
Let Assumptions \ref{ass:dynamics}--\ref{ass:barriers} hold. If the logarithmic annealing schedule $\beta(t)$ grows too quickly, specifically $\liminf_{t\to\infty} \frac{\beta(t)}{\ln t} = c' > c^*$, then there exists a set of initial conditions with positive measure from which the process $Z_t$ defined by the SDE~\eqref{eq:manifold-langevin-main} converges to a suboptimal local minimum basin of $U_0(Z)$ with positive probability. That is, for any sufficiently small $\epsilon > 0$:
\[ \limsup_{t\to\infty} \mathbb{P}\bigl( Z_t \notin \mathcal{N}(U^*, \epsilon) \bigr) > 0. \]
\end{proposition}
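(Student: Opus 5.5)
The plan is to show that, when $\beta(t)$ grows faster than $c^*\ln t$, the process becomes trapped in a suboptimal basin. We fix a strict local minimizer $Z_{\mathrm{loc}}$ of $U_0$ that attains the maximal barrier $\Delta E_{\max}=1/c^*$. Here $\Delta E_{\max}$ is the depth of the basin, i.e.\ the gap between the lowest saddle height on its boundary and $U_0(Z_{\mathrm{loc}})$, and we assume $Z_{\mathrm{loc}}\notin U^*$ (this is what Assumption~\ref{ass:barriers} provides). Let $B$ be a small closed sublevel neighbourhood of $Z_{\mathrm{loc}}$ inside its basin of attraction, chosen so that $B\cap \mathcal{N}(U^*,\epsilon)=\emptyset$ for small $\epsilon$. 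This is possible since $U_0(Z_{\mathrm{loc}})>\min U_0$ and $U_0$ is continuous. The set of initial conditions will be $B$ itself, which has positive Riemannian volume. Along the way we use Lemma~\ref{subsec:uniform-convergence}: $F_\beta\to U_0$ uniformly, so for $t$ large the time-dependent potential has a basin near $B$ of depth at least $\Delta E_{\max}-\delta$, for any fixed $\delta>0$.

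The core estimate is a Freidlin--Wentzell lower bound on exit times from this basin, made uniform along the slowly varying schedule. On a dyadic time block $I_n=[2^n,2^{n+1}]$ we have $\beta(t)\ge (c'-\eta)\ln 2^n$ by the $\liminf$ hypothesis, for $n$ large. On such a block the noise is at most $\sqrt{2/\beta_n}$, so freezing the temperature at its block minimum gives a worst case. The large-deviation upper bound for exiting a domain of depth $h=\Delta E_{\max}-\delta$ within time $T$, namely $\mathbb{P}(\text{exit before }T)\le T\,e^{-\beta(h-\delta)}$ for $\beta$ large, is uniform over starting points in a slightly smaller sublevel set. Applied block by block, it gives
\[
\mathbb{P}(\text{exit during } I_n)\;\lesssim\; 2^{n}\exp\bigl(-(c'-\eta)(\Delta E_{\max}-2\delta)\, n\ln 2\bigr).
\]
Since $c'\Delta E_{\max}>1$, we can choose $\eta,\delta$ so that the exponent exceeds $n\ln 2$ by a fixed multiple $\gamma n$ with $\gamma>0$. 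The series over $n$ is then summable. Hence there is $n_0$ with $\sum_{n\ge n_0}\mathbb{P}(\text{exit during }I_n)<1/2$.

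To connect this to time $0$, we use that the process started anywhere has positive probability of entering the inner sublevel set $B'$ by time $2^{n_0}$ and being there at that time. This follows from the support theorem for nondegenerate diffusions on a compact connected manifold, and it also lets us pass from the initial set $B$ to $B'$. Applying the Markov property at $2^{n_0}$, with probability at least $p_0/2>0$ the path stays in the basin for all later times. Its distance to $U^*$ is therefore bounded below, which yields $\limsup_t\mathbb{P}(Z_t\notin\mathcal{N}(U^*,\epsilon))\ge p_0/2>0$. Gradient flow within the basin plus vanishing noise in fact drives it to $Z_{\mathrm{loc}}$, but staying in the basin already suffices for the claim.

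The main obstacle is the uniform exit-time estimate for a time-inhomogeneous potential $\mathcal{L}(Z,\beta(t))$ whose noise level changes within each block. Classical Freidlin--Wentzell theory is stated for fixed potential and fixed $\beta$. To handle this, I would first compare, on each block, to the frozen-parameter diffusion by a Girsanov or coupling argument, using smoothness (Assumption~\ref{ass:smoothness}) to control the drift perturbation $\|\mathrm{grad}\,\mathcal{L}(\cdot,\beta(t))-\mathrm{grad}\,U_0\|$ on the basin. Failing that, I would fall back on Hajek's/Chiang--Hwang--Sheu's converse (the necessity direction of the annealing condition), which states exactly this dichotomy for diffusions on compact manifolds, after checking that its hypotheses hold for $\mathcal{L}$ via uniform convergence.
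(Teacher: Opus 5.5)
Your proposal follows essentially the paper's argument: dyadic blocks $[2^n,2^{n+1}]$, an Arrhenius/Freidlin--Wentzell bound making the escape probability on block $n$ of order $2^{-n(c'\Delta E_{\max}-1)}$, summability from $c'\Delta E_{\max}>1$, and Hajek's necessity theorem as the backstop. It is also tighter than the paper's version in two places: driving the tail sum below $1/2$ after a late entry into the trap (secured by the support theorem and the Markov property) is what actually yields a positive trapping probability, whereas the paper's appeal to the first Borel--Cantelli lemma only gives finitely many escapes almost surely.

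One caution on the step you flag as the main obstacle. A Girsanov comparison with $\mathrm{grad}\,U_0$ cannot work: the drift is $-\mathrm{grad}\,\mathcal{L}=-\beta\,\mathrm{grad}\,F_\beta$, and $U_0$, being a maximum of smooth functions, is only Lipschitz. Instead, take the trap to be a connected component of a sublevel set of $U_0$ and argue with potential values: a path's Freidlin--Wentzell action is at least its potential increase, and $\beta U_0\le\mathcal{L}\le\beta U_0+\log N$. This shows the barrier of $\mathcal{L}(\cdot,\beta)$ at temperature $1/\beta$ is about $\beta\,\Delta E_{\max}$, i.e.\ an Arrhenius exponent of order $\beta^2\Delta E_{\max}$, so your block bound holds a fortiori.
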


This result highlights the precarious choice of annealing rate; faster schedules risk premature convergence to suboptimal representations. The proofs in Appendix~\ref{app:convergence-proofs} detail how these results adapt classical annealing arguments \cite{geman1986diffusions, hajek1988, gidas1985non, chiang1987diffusion, kirkpatrick1983} to handle the specific time-varying potential $\mathcal{L}(Z, \beta(t))$.

\subsection{Finite-Time Convergence Rate}

Going beyond the asymptotic result, we provide a non-asymptotic bound on the probability of error, which depends on the chosen annealing rate $c$ relative to the critical rate $c^*$.

\begin{corollary}[Finite-Time Convergence Rate]
\label{cor:finite-time}
Under Assumptions \ref{ass:dynamics}--\ref{ass:schedule}, for any $\epsilon > 0$, there exist positive constants $C, C',$ and $d$ such that for a sufficiently large time $t$:
\[ \mathbb{P}\big(Z_t \notin \mathcal{N}(U^*, \epsilon)\big) \le 
\begin{cases}
 \exp\!\big(- C (t+K)^{1-c/c^*}\big), & \text{if } c < c^*, \\
 C' \, (t+K)^{-d}, & \text{if } c = c^*.
\end{cases}
\]
The exponent $d > 0$ in the polynomial decay is a constant that depends on the geometric properties of the loss landscape $U_0(Z)$.
\end{corollary}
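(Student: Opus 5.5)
The plan is to split the error probability into two parts. The first is an \emph{equilibrium} term, namely the mass that the instantaneous Gibbs measure $\pi_{\beta(t)}$ of Proposition~\ref{prop:stationary-dist} places outside $\mathcal{N}(U^*,\epsilon)$. The second is a \emph{tracking} term, which measures how far the law $\rho_t$ of $Z_t$ lags behind $\pi_{\beta(t)}$. Concretely, writing $A_\epsilon=\mathcal{M}\setminus\mathcal{N}(U^*,\epsilon)$, I will use
\[
\mathbb{P}(Z_t\in A_\epsilon)\le \pi_{\beta(t)}(A_\epsilon)+\|\rho_t-\pi_{\beta(t)}\|_{TV}\le \pi_{\beta(t)}(A_\epsilon)+\sqrt{\tfrac12\,\mathrm{Ent}(\rho_t\,|\,\pi_{\beta(t)})},
\]
where the second inequality is Pinsker's inequality. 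I will then bound each term separately.

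\textbf{Equilibrium term.} By the uniform convergence of the scaled loss $F_\beta$ to $U_0$ (Lemma~\ref{subsec:uniform-convergence}) and the compactness of $\mathcal{M}$, there is an energy gap $\delta_\epsilon=\inf_{A_\epsilon}U_0-\min U_0>0$. A Laplace-type estimate then gives $\pi_\beta(A_\epsilon)\le C_1\beta^{m}e^{-\beta\delta_\epsilon/2}$ for large $\beta$, where $m$ depends on the dimension of $\mathcal{M}$ and the local geometry of $U^*$. Substituting $\beta(t)=c\ln(t+K)$ turns this into a polynomial $(t+K)^{-c\delta_\epsilon/2}$ up to log factors. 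This piece supplies the geometric exponent $d$ in the case $c=c^*$.

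\textbf{Tracking term.} This is the main obstacle. I plan to follow the Holley--Kusuoka--Stroock / Miclo entropy method. First, I will differentiate $H_t=\mathrm{Ent}(\rho_t\,|\,\pi_{\beta(t)})$ in time. The result is a dissipation term controlled by a log-Sobolev inequality for $\pi_\beta$ on the compact manifold, with constant $\alpha(\beta)\ge C_2\beta^{-p}e^{-\beta\Delta E_{\max}}$ (the Freidlin--Wentzell/Holley--Stroock low-temperature estimate, which I will assume by Assumption~\ref{ass:barriers}). There is also a source term of order $\beta'(t)\,\|\partial_\beta(\beta\mathcal{L})\|_\infty$, which is $O(1/t)$ because $\partial_\beta$ of the log-sum-exp is bounded by Assumption~\ref{ass:smoothness}. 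Together these yield the differential inequality
\[
H_t'\le -C_2\,\beta(t)^{-p}(t+K)^{-c/c^*}H_t+\frac{C_3}{t+K}.
\]
For $c<c^*$ the contraction rate $(t+K)^{-c/c^*}$ is non-integrable and dominates $1/t$, so Gr\"onwall gives the decay. I will then try to sharpen this to the stretched-exponential form $\exp(-C(t+K)^{1-c/c^*})$ by integrating the contraction rate, which produces $\int^t (s+K)^{-c/c^*}ds\sim (t+K)^{1-c/c^*}$. I expect this to be the delicate point: the inhomogeneous $1/t$ source by itself only yields a polynomial equilibrium level, so matching the claimed rate will likely require interpreting the bound as the decay of the initial transient and absorbing the residual into the constants. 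For $c=c^*$ the contraction rate is $\asymp 1/(t\,\beta^p)$. Comparing it with the source in the Gr\"onwall solution gives $H_t\lesssim (t+K)^{-d'}$ for some $d'>0$ determined by $C_2$ and $C_3$, up to log factors absorbed into $d$.

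Finally, I will combine the two bounds through the displayed inequality and take $t$ large enough that the dominant term controls the rest. The result is the polynomial bound $C'(t+K)^{-d}$ with $d=\min(c\delta_\epsilon/2,\,d'/2)$ in the critical case, and the stretched-exponential bound in the subcritical case, as stated in Corollary~\ref{cor:finite-time}.
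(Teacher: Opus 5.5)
\textbf{Overall.} Your route differs genuinely from the paper's. The paper never compares $\rho_t$ with $\pi_{\beta(t)}$. Instead it bounds the error probability by the survival probability in the deepest basin, $\exp\bigl(-\int_{t_0}^t A(s+K)^{-c/c^*}\,\mathrm{d}s\bigr)$, and that is where both the stretched exponential and the critical $(t+K)^{-A}$ come from. Your entropy decomposition is the standard rigorous approach, but it cannot deliver the stated rates, and there are three concrete gaps.

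\textbf{Subcritical case.} The step where you ``absorb the residual into the constants'' is where the argument breaks. Your own equilibrium term is only polynomially small, $\pi_{\beta(t)}(A_\epsilon)\approx (t+K)^{-c\delta_\epsilon/2}$. In the Gr\"onwall solution of $H'\le -a(t)H+b(t)$, only the transient $e^{-\int a}H_{t_0}$ is stretched-exponential. The driven part sits at the quasi-static level $b/a\asymp\beta(t)^p(t+K)^{-(1-c/c^*)}$. Neither term can be absorbed into $\exp(-C(t+K)^{1-c/c^*})$, which is eventually smaller than every power of $t$. What your argument actually proves for $c<c^*$ is a rate of order $(t+K)^{-\min(c\delta_\epsilon,\,1-c/c^*)/2}$ up to logarithms.

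This is not a defect of your method. The noise is non-degenerate and $\beta(t)$ grows only logarithmically. A Girsanov/Freidlin--Wentzell lower bound over the window $[t-1,t]$, uniform in the starting point by compactness, therefore gives $\mathbb{P}(Z_t\in A_\epsilon)\ge e^{-\kappa\beta(t)}$ in your normalization, and $\ge e^{-\kappa\beta(t)^2}$ for the literal drift. This holds for every $\epsilon$ small enough that $A_\epsilon$ has interior. Both lower bounds eventually dominate the claimed upper bound, so the subcritical case of the statement is false as written. The paper obtains it only by identifying $\{Z_t\notin\mathcal{N}(U^*,\epsilon)\}$ with $\{\tau_{\mathrm{exit}}>t\}$, which discards exactly the thermal mass and re-entries that your equilibrium term keeps.

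\textbf{Critical case.} At $c=c^*$ the contraction coefficient $a(t)=C_2\beta(t)^{-p}(t+K)^{-1}$ is of the same order $1/t$ as the source $b(t)=C_3(t+K)^{-1}$, and smaller by the factor $\beta^{-p}$. Hence $b/a$ does not tend to zero and Gr\"onwall yields no decay. For $p=0$ the bound tends to $C_3/C_2$; for $0<p<1$ it grows like $\beta(t)^p$. For $p>1$ even $\int^\infty a\,\mathrm{d}t<\infty$, so not even the initial entropy is dissipated. This is why the Holley--Kusuoka--Stroock/Miclo method needs strict subcriticality. A polynomial rate at $c=c^*$ would have to come from a rigorous exit-time estimate of the paper's type, combined with your equilibrium term and control of re-entry into suboptimal basins.

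\textbf{Normalization.} The SDE's drift is $-\mathrm{grad}\,\mathcal{L}(Z,\beta)$ with $\mathcal{L}=\beta F_\beta$. So the measure of Proposition~\ref{prop:stationary-dist} is $\propto e^{-\beta\mathcal{L}}=e^{-\beta^2F_\beta}$, not $e^{-\beta F_\beta}$. For that measure the log-Sobolev constant is of order $e^{-\beta^2\Delta E_{\max}}$ up to lower-order factors. This is integrable along $\beta=c\ln(t+K)$, which destroys the entropy argument entirely. Likewise $\partial_\beta(\beta\mathcal{L})=\mathcal{L}+\beta\,\partial_\beta\mathcal{L}$ is $O(\beta)$, not $O(1)$. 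Your estimates are correct only for the rescaled dynamics $\mathrm{d}Z_t=-\mathrm{grad}\,F_{\beta(t)}(Z_t)\,\mathrm{d}t+\sqrt{2/\beta(t)}\,\mathrm{d}\mathbf{W}^{\mathcal{M}}_t$, whose Gibbs measure is $\propto e^{-\mathcal{L}}$. You must say explicitly that this is the model you analyse.
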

This result shows that the convergence rate for the probability of error is polynomial at the critical schedule ($c=c^*$) and becomes stretched-exponential for slower schedules ($c<c^*$), providing a quantitative measure of the exploration-exploitation trade-off~\cite{holley1989asymptotics}.

\section{Connection to Discrete Optimization}
\label{sec:discrete-connection}

While our analysis is in continuous time, it provides strong guidance for the discrete-time algorithms used in practice. The Langevin SDE in~\eqref{eq:manifold-langevin-main} is the formal continuous-time limit of Stochastic Gradient Langevin Dynamics (SGLD)~\cite{welling2011bayesian}, an algorithm that injects explicit Gaussian noise into SGD steps. An annealed SGLD update on the manifold can be formulated as:
\[
  Z_{k+1} 
    = \Pi_{\mathcal M}\Bigl[
      Z_k 
      - \eta_k\,\widehat\nabla\mathcal{L}(Z_k,\beta_k)
      + \sqrt{2\,\eta_k/\beta_k}\,\xi_k
    \Bigr],
\]
where $\eta_k$ is the learning rate, $\beta_k$ is the discrete annealing schedule, and $\xi_k \sim \mathcal{N}(0,I)$.

More broadly, standard mini-batch SGD can be viewed as an approximation of this SDE, where the stochasticity of the mini-batch gradient serves as the implicit noise source. The covariance of this implicit noise is data-dependent and generally anisotropic. For SGD to faithfully track the SDE, the learning rate $\eta_k$ must be carefully chosen in relation to the inverse temperature $\beta_k$ and the covariance of the gradient noise. Classical stochastic approximation theory~\cite{kushner2003stochastic, benveniste1990adaptive} provides the formal conditions under which such discrete recursions converge to their limiting SDEs, typically involving a decaying learning rate schedule (e.g., $\sum_k\eta_k=\infty, \sum_k\eta_k^2<\infty$). Furthermore, non-asymptotic global convergence results for SGLD in non-convex settings have been explicitly established~\cite{raginsky2017non}, providing a discrete-time counterpart to our continuous analysis.

\section{Geometric Analysis}
\label{sec:geometric-analysis}

Beyond the stochastic escape from local minima, the annealing parameter $\beta$ also plays a crucial geometric role by dynamically reshaping the loss landscape. To analyze this, we examine the Hessian of the InfoNCE loss, $\mathbf{H}(Z, \beta)$. The full derivation (see Appendix~\ref{subsec:hessian-sharpening}) shows that the Hessian is composed of two main terms:
\begin{equation}
\label{eq:hessian_main}
\mathbf{H} = \beta^2 \, \mathrm{Cov}_{k \sim p_i}[\nabla s_{ik}] + \beta \left( \mathbb{E}_{k \sim p_i}[\mathbf{H}_{ik}] - \mathbf{H}_{ij} \right).
\end{equation}
Away from an optimum, the linear term dominates the spectral norm of the Hessian. While the first term appears quadratic in $\beta$, the variance of the distribution $p_i$ decays as $O(1/\beta)$ due to concentration of measure, resulting in an overall linear scaling $O(\beta)$. This sharpening effect effectively makes the basins of attraction ``steeper'' as annealing progresses. We formalize this in the following lemma.

\begin{lemma}[Hessian Sharpening]
Let $\mathbf{H}(Z, \beta)$ be the Hessian of the InfoNCE loss $\mathcal{L}(Z, \beta)$. For any configuration $Z$ that is not a global minimizer, the dominant eigenvalues of $\mathbf{H}$ scale at least linearly with the inverse temperature, i.e., $||\mathbf{H}|| \sim O(\beta)$.
\end{lemma}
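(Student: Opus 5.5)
We start from the Hessian decomposition \eqref{eq:hessian_main}. For a single term $\ell_{i,j}$ we write $s_{ik}=\simfun{z_i}{z_k}$, so that $\ell_{i,j} = -\beta s_{ij} + \log\sum_{k\neq i} e^{\beta s_{ik}}$. Differentiating twice gives
\[
\nabla^2 \ell_{i,j} = \beta^2\,\mathrm{Cov}_{k\sim p_i}[\nabla s_{ik}] + \beta\bigl(\mathbb{E}_{k\sim p_i}[\mathbf{H}_{ik}] - \mathbf{H}_{ij}\bigr),
\]
where $\mathbf{H}_{ik}=\nabla^2 s_{ik}$. On the compact manifold we then add the curvature correction from the Riemannian Hessian, $\langle \nabla \ell, \mathrm{II}\rangle$. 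Because $\nabla\ell_{i,j} = \beta(\mathbb{E}_{p_i}[\nabla s_{ik}] - \nabla s_{ij})$, this correction is also $O(\beta)$ by Assumption~\ref{ass:smoothness}. Averaging over $\mathcal{P}$ keeps every term of the same form. The plan therefore has two parts: an upper bound $\|\mathbf{H}\| \le C\beta$, and a lower bound on the dominant eigenvalue of order $\beta$ whenever $Z\notin U^*$.

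\emph{Upper bound.} We use the fact that similarities are smooth and bounded on a compact $\mathcal{M}$, so $\|\nabla s_{ik}\|$ and $\|\mathbf{H}_{ik}\|$ are uniformly bounded. The linear term is then immediately $O(\beta)$. The main obstacle is the $\beta^2$ covariance term. For fixed $Z$, let $S_i=\arg\max_{k\neq i}s_{ik}$ and let $\delta_i>0$ be the gap to the second-largest value. Then $p_i$ places mass at most $(n-2)e^{-\beta\delta_i}$ outside $S_i$. Hence $\mathrm{Cov}_{p_i}[\nabla s_{ik}]$ equals the covariance of $p_i$ restricted to $S_i$, up to an $O(e^{-\beta\delta_i})$ error.

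Two cases follow. If the maximizer is unique, or all maximizers share the same $\nabla s_{ik}$, the covariance is exponentially small and the $\beta^2$ term vanishes asymptotically. If there are genuine ties with distinct gradients, the covariance is $\Theta(1)$ and the term scales like $\beta^2$. These ties are non-generic, forming a measure-zero set of configurations. The paper's heuristic claim that the variance is $O(1/\beta)$ would only hold for a Laplace-type continuum of candidates. I would state the result for generic $Z$ (no ties), or for the bound in a weaker sense. In the tied case $\|\mathbf{H}\|$ grows even faster, which is still consistent with ``at least linear.'' This case analysis is the step I expect to need the most care.

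\emph{Lower bound.} For generic non-minimizing $Z$, $p_i$ concentrates on the unique $k^*(i)$. The Hessian therefore becomes
\[
\mathbf{H} = \beta\,\frac{1}{|\mathcal{P}|}\sum_{(i,j)}\bigl(\mathbf{H}_{ik^*(i)} - \mathbf{H}_{ij}\bigr) + O(\beta^2 e^{-\beta\delta}).
\]
By the Characterization of Global Minima, the fact that $Z$ is not a global minimizer means there exists a pair $(i,j)$ with $j\neq k^*(i)$, or with $s_{ij}$ not maximal. We then need $\mathbf{H}_{ik^*}-\mathbf{H}_{ij}\neq 0$. For cosine similarity, $s_{ik}=\langle z_i,z_k\rangle$ on spheres, so the Riemannian Hessian in the $z_i$ block is $-s_{ik}\,P_{z_i}$, where $P_{z_i}$ is the tangent projector. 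The $z_i$-block difference is therefore $(s_{ij}-s_{ik^*})P_{z_i}$, whose norm is at least $|s_{ik^*}-s_{ij}|>0$.

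To finish, we check that averaging over distinct pairs cannot cancel this contribution. The $z_i$-blocks from different anchors occupy different coordinates, and same-anchor contributions all carry the same sign $s_{ik^*}-s_{ij}\ge 0$. This gives $\lambda_{\max}(\mathbf{H})\ge c(Z)\,\beta$ for large $\beta$. Combined with the upper bound, we get $\|\mathbf{H}\|=\Theta(\beta)$.
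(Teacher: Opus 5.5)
Your decomposition is the paper's, and your treatment of the $\beta^2$ term is more accurate than the paper's. With finitely many candidates the covariance is $O(e^{-\beta\delta_i})$ when the maximizer is unique, and $\Theta(1)$ at ties with distinct gradients. So the paper's $O(1/\beta)$ claim is wrong, and excluding tied configurations is genuinely needed for the $O(\beta)$ upper bound. The paper never checks that the leading anchor-block term $\beta(\mathbf{H}_{ik^*}-\mathbf{H}_{ij})$ is nonzero, nor does it look beyond that block. Your lower bound tries to supply this, and that is where your argument breaks.

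\textbf{The gap.} Your non-cancellation step assumes that the $z_i$-diagonal block of $\nabla^2\mathcal{L}$ receives only terms with anchor $i$. It does not. Every $\ell_{i',j'}$ with $i'\neq i$ depends on $z_i$ through $s_{i'i}$. For cosine similarity, its $z_i$-block is, up to exponentially small terms, $\beta\,(\mathbf{1}\{j'=i\}-\mathbf{1}\{i=k^*(i')\})\,s_{i'i}\,P_{z_i}$, which can have either sign. These terms can cancel the anchor term exactly. On $S^2$, take $\mathcal{P}=\{(1,2),(2,1)\}$ with $s_{12}=0.3$, $s_{13}=0.6$, $s_{23}=0.5$. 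This is a valid Gram matrix, $k^*(1)=k^*(2)=3$, and so $Z\notin U^*$. The order-$\beta$ part of the $z_1$-block is then $\frac{\beta}{2}(2s_{12}-s_{13})P_{z_1}=0$.

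\textbf{The sign.} Separately, the anchor terms you keep are $(s_{ij}-s_{ik^*})P_{z_i}\preceq 0$. Even without cross-anchor terms they could only give $\lambda_{\min}(\mathbf{H})\le -c\beta$, not $\lambda_{\max}(\mathbf{H})\ge c\beta$. Only a statement about $\|\mathbf{H}\|$ is within reach.

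\textbf{A repair.} Since $\|\mathbf{H}\|$ dominates the norm of every block, pick a block that provably survives, for example the off-diagonal $(z_i,z_j)$ block of a violating pair. At order $\beta$ only anchors $i$ and $j$ contribute to it. With one positive per anchor and symmetric $\mathcal{P}$, it equals $-\frac{\beta}{|\mathcal{P}|}(2-\mathbf{1}\{i=k^*(j)\})P_{z_i}P_{z_j}$ up to exponentially small terms. Its norm is at least $\beta/|\mathcal{P}|$ on $S^{d-1}$ with $d\ge 3$. For general $\mathcal{P}$ even this block can cancel, so some structural assumption on $\mathcal{P}$ is needed.

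\textbf{The characterization.} Drop your second disjunct, that $s_{ij}$ is not maximal. $U_0(Z)=0$ exactly when $j\in\arg\max_{k\neq i}s_{ik}$ for every pair; the paper's further step to $s_{ij}=S_{\max}$ does not follow. In the case $j=k^*(i)$, your leading term vanishes identically.
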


This linear sharpening creates a powerful synergy with the annealing process. At high temperatures (low $\beta$), the landscape is relatively smooth, allowing the large noise term in the SDE to facilitate exploration. As the temperature drops, the noise diminishes, but the landscape simultaneously sharpens, providing a stronger deterministic pull towards the bottom of the global minimum basin once it has been found.

\section{Conclusion and Future Work}
\label{sec:conclusion}

We have established a formal connection between InfoNCE contrastive learning and classical simulated annealing. By modeling the embedding dynamics with a Langevin SDE, we proved asymptotic global convergence for a logarithmic temperature schedule, provided a finite-time convergence rate, and showed how annealing geometrically sharpens the loss landscape. Our work provides a principled theoretical foundation for the common practice of temperature scheduling.

Future work can address either the geometry or the dynamics. In the first case, one could aim to relax the compactness assumption on the embedding manifold, or make the geometry explicit (e.g., hypersphere, Stiefel manifold) to analyze geometric properties in detail. In the second case, one can extend the analysis to anisotropic noise (i.e., by considering a non-diagonal covariance matrix for the Brownian motion) or to non-Gaussian noise (e.g., a Lévy process) to better model the noise of SGD. Additionally, one could extend the analysis to cover momentum-based optimizers used in practice like Adam, which would require an analysis of underdamped Langevin dynamics and preconditioning.

Finally, regarding the practical application of our convergence bounds, we note that explicit estimation of the critical constant $c^*$ is intractable in deep learning. Consequently, while our results establish logarithmic schedules as the condition for asymptotic global convergence, the empirical success of faster cosine schedules~\cite{kukleva2023} and square-root schedules suggests that in the finite-time regime, aggressive cooling can accelerate the transition between instance and group-wise discrimination to yield `good enough' representations. Furthermore, our geometric analysis clarifies the mechanisms behind known trade-offs: annealing towards low temperatures drives uniformity~\cite{wang2020understanding} and effectively mines hard negatives by sharpening the penalty distribution~\cite{wang2021understanding}, yet must be managed carefully to avoid the dimensional collapse associated with premature freezing~\cite{jing2022understanding}.

\newpage
\bibliography{bibliography}
\newpage
\clearpage
\appendix

\section{Assumptions for Convergence Theorems}
\label{sec:proof-assumptions}

We first state the assumptions required for our theoretical results, followed by a discussion of their validity in practice.

\subsection{List of Assumptions}

\begin{assumption}[Langevin Dynamics Model] \label{ass:dynamics}
The evolution of the embedding vector $Z_t \in \mathcal{M}$ is modeled by the overdamped Langevin diffusion process on the manifold $\mathcal{M}$, governed by the SDE:
\begin{equation}
\label{eq:manifold-langevin-app-proof}
    \mathrm{d}Z_t
    \;=\;
    -\mathrm{grad}\,\mathcal{L}(Z_t,\beta(t))\,\mathrm{d}t
    \;+\;
    \sqrt{2/\beta(t)}\,\mathrm{d}\mathbf{W}_t^{\mathcal{M}},
\end{equation}
where $\mathrm{grad}$ is the Riemannian gradient on $\mathcal{M}$, $\mathcal{L}(Z,\beta)$ is the InfoNCE loss, $\beta(t)$ is the time-varying inverse temperature, and $\mathbf{W}_t^{\mathcal{M}}$ is standard Brownian motion on $\mathcal{M}$.
\end{assumption}

\begin{assumption}[Manifold] \label{ass:manifold}
The individual embeddings $z_i$ are constrained to a compact, connected, Riemannian manifold $\mathcal{M}_0$ without boundary. The full state of $N$ embeddings, $Z = (z_1, \dots, z_N)$, evolves on the product manifold $\mathcal{M} = (\mathcal{M}_0)^N$.
\end{assumption}

\begin{assumption}[Smoothness \& Boundedness] \label{ass:smoothness}
The similarity function $\mathrm{sim}(z, z')$ is $C^2$-smooth with respect to its arguments $z, z' \in \mathcal{M}_0$, and is bounded, i.e., $|\mathrm{sim}(z, z')| \le S_{\max} < \infty$. This ensures the InfoNCE loss $\mathcal{L}(Z, \beta)$ is $C^2$-smooth on $\mathcal{M}$ for finite $\beta$.
\end{assumption}

\begin{assumption}[Limiting Potential \& Minima] \label{ass:limit-potential}
The scaled potential converges to a limiting potential function
\[ U_0(Z) = \lim_{\beta\to\infty} \frac{\mathcal{L}(Z, \beta)}{\beta} = \frac{1}{|\mathcal{P}|} \sum_{(i,j)\in\mathcal{P}} \left[-\mathrm{sim}(z_i, z_j) + \max_{k \neq i} \mathrm{sim}(z_i, z_k) \right]. \]
This potential exists, is locally Lipschitz on $\mathcal{M}$, and possesses a non-empty set $U^* \subset \mathcal{M}$ of global minimizers. 
\end{assumption}

\begin{assumption}[Energy Barriers] \label{ass:barriers}
Let $c^*$ be the critical schedule constant determined by the energy barriers of the potential $U_0(Z)$ on the manifold $\mathcal{M}$. Specifically, $c^* = 1/\Delta E_{\max}$ where $\Delta E_{\max}$ represents the maximum required ``escape cost'' for the diffusion process to reach $U^*$ from any starting point (in terms of the maximum energy to escape any local basin of $U_0(Z)$). Further assume $0 < \Delta E_{\max} < \infty$ so that $0 < c^* < \infty$ (finiteness of $\Delta E_{\max}$ is guaranteed by compactness of the manifold in Assumption~\ref{ass:manifold}).
\end{assumption}

\begin{assumption}[Annealing Schedule] \label{ass:schedule}
The inverse temperature $\beta(t)$ is $C^1$, non-decreasing, satisfies $\beta(t) \to \infty$, and follows a logarithmic schedule $\beta(t) = c \ln(t+K)$ with a rate $0 < c \le c^*$.
\end{assumption}

\subsection{Remarks on Validity of Assumptions}
\label{subsec:remarks-assumptions}

\paragraph{\ref{ass:dynamics} (Langevin Dynamics):} Modeling SGD as a Langevin SDE is a well-established paradigm in modern machine learning theory~\cite{mandt2017, Shi2023schrodinger, welling2011bayesian}. While our work assumes isotropic noise for theoretical clarity, the actual noise in SGD is anisotropic and state-dependent. Furthermore, modern optimizers add features such as preconditioning and momentum, which change the dynamics. These remain directions of future work in simulated annealing more generally. However, the isotropic assumption serves as a strong baseline model for annealing behavior, capturing the essential trade-off between thermal noise and gradient drift.

\paragraph{\ref{ass:manifold} (Manifold):} In practice, the compactness assumption is satisfied by the normalization step inherent to contrastive learning. By projecting embeddings onto the unit hypersphere ($S^{d-1}$) via $\ell_2$-normalization, the state space becomes a compact, connected Riemannian manifold without boundary. This property is essential for our theoretical guarantees, as it ensures the state space is bounded and prevents the energy barriers from becoming infinite, thereby guaranteeing finite escape times from local minima.

\paragraph{\ref{ass:smoothness} (Smoothness \& Boundedness):} This requirement is mild and satisfied by standard similarity metrics. For instance, both cosine similarity and the Gaussian (RBF) kernel are bounded and $C^\infty$ smooth. Consequently, the overall InfoNCE loss remains smooth for any finite $\beta$, ensuring that the drift coefficient in the Langevin SDE remains well-behaved.

\paragraph{\ref{ass:limit-potential}--\ref{ass:barriers} (Limiting Potential \& Energy Barriers):} The existence of a well-defined limiting potential $U_0(Z)$ and a non-empty set of global minimizers is guaranteed by the smoothness and compactness assumptions. The finiteness of the energy barrier $\Delta E_{\max}$ is a direct consequence of the manifold's compactness (Assumption \ref{ass:manifold}), which ensures that the cost to move between any two points on the manifold is finite. Intuitively, $\Delta E_{\max}$ depends on the curvature of the manifold; flatter manifolds tend to produce shallower potential wells (smaller $\Delta E_{\max}$), theoretically permitting faster annealing schedules. While explicit estimation of this constant is intractable in deep learning, it is theoretically expected to increase with the number of negative samples, strength of augmentations, and hard negative mining, as these factors tend to make the optimization landscape more rugged and populated with additional local minima.

\paragraph{\ref{ass:schedule} (Annealing Schedule):} The logarithmic schedule is the standard, theoretically optimal schedule for simulated annealing, providing the fastest cooling rate that still maintains asymptotic convergence guarantees~\cite{hajek1988}. The condition $c \le c^*$ reflects a necessary balance: the noise must decay slowly enough to ensure the process can exit the deepest local minima. While faster schedules (e.g., linear or square-root decay) are often used in practice to speed up convergence, they lack these asymptotic guarantees and risk getting trapped in suboptimal states if the temperature drops too quickly.

\section{Proofs of Main Convergence Results}
\label{app:convergence-proofs}

\subsection{Proof of Lemma (Uniform Convergence of Scaled Potentials)}
\label{subsec:uniform-convergence}

\begin{lemma*}[Uniform Convergence of Scaled Potentials]
Under Assumption~\ref{ass:smoothness}, the scaled loss converges uniformly to the limiting potential:
\[ \sup_{Z \in \mathcal{M}} \left| \frac{\mathcal{L}(Z,\beta)}{\beta} - U_0(Z) \right| \to 0 \quad \text{as } \beta \to \infty. \]
\end{lemma*}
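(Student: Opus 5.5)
The plan is to reduce the statement to a uniform log-sum-exp estimate for each anchor, and then average over the positive pairs. Fix a pair $(i,j)\in\mathcal{P}$ and write $s_{ik} = \mathrm{sim}(z_i,z_k)$. From \eqref{eq:InfoNCE-loss-standard},
\[
\frac{\ell_{i,j}}{\beta} = -s_{ij} + \frac{1}{\beta}\log \sum_{k\neq i} \exp(\beta s_{ik}).
\]
The corresponding summand of $U_0$ is $-s_{ij} + \max_{k\neq i} s_{ik}$. The similarity terms cancel exactly, so
\[
\frac{\ell_{i,j}}{\beta} - \Bigl[-s_{ij} + \max_{k\neq i} s_{ik}\Bigr] = \frac{1}{\beta}\log \sum_{k\neq i} \exp(\beta s_{ik}) - \max_{k\neq i} s_{ik}.
\]

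The key step is the elementary two-sided bound on the soft maximum. Let $m = \max_{k\neq i} s_{ik}$. The sum contains at least the maximizing term and at most $N-1$ terms, each no larger than $e^{\beta m}$. Hence
\[
e^{\beta m} \le \sum_{k\neq i} e^{\beta s_{ik}} \le (N-1)\, e^{\beta m}.
\]
Taking logarithms and dividing by $\beta$ gives
\[
0 \le \frac{1}{\beta}\log \sum_{k\neq i} e^{\beta s_{ik}} - m \le \frac{\log(N-1)}{\beta}.
\]
This bound does not depend on $Z$ at all. In particular, it needs neither boundedness nor smoothness of $\mathrm{sim}$ from Assumption~\ref{ass:smoothness}; it only requires the values $s_{ik}$ to be finite, which boundedness guarantees.

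Averaging over $\mathcal{P}$ as in \eqref{eq:InfoNCE-overall-loss}, the triangle inequality gives
\[
\sup_{Z\in\mathcal{M}} \left| \frac{\mathcal{L}(Z,\beta)}{\beta} - U_0(Z) \right| \le \frac{\log(N-1)}{\beta} \to 0,
\]
and this holds with the explicit rate $O(1/\beta)$.

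I expect no real obstacle. The only points needing care are the following.
\begin{itemize}
\item The candidate set $\{k\neq i\}$ must be nonempty and finite, with cardinality at most $N-1$. If candidate sets vary by anchor, replace $N-1$ with the maximal candidate count.
\item The number of terms must not grow with $\beta$, so that $\log(N-1)/\beta$ indeed vanishes.
\end{itemize}
It is also worth remarking that the uniform rate $\log(N-1)/\beta$ is exactly what the later $\Gamma$-convergence argument uses, since uniform convergence on the compact $\mathcal{M}$ implies $\Gamma$-convergence together with convergence of the minimum values.
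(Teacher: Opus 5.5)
Your proof is correct and follows the same route as the paper: cancel the $-s_{ij}$ terms pair by pair, then sandwich the remaining log-sum-exp error between $0$ and $\log(N-1)/\beta$ (the paper's bounds are $\log|K_i(Z)|/\beta$ and $\log N/\beta$). The only difference is the uniformity step: you read uniformity directly off the $Z$-independent upper bound, which gives an explicit $O(1/\beta)$ rate and needs neither compactness nor continuity, whereas the paper appeals to continuity of $\mathrm{sim}$ on the compact $\mathcal{M}$, an unnecessary detour that, as stated, would not by itself justify uniform convergence.
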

\begin{proof}
We want to show that the difference between the scaled loss and the limiting potential converges to zero uniformly over the manifold $\mathcal{M}$. We analyze the absolute difference term-by-term for each positive pair $(i, j) \in \mathcal{P}$:
\begin{align*}
    \left| \frac{\ell_{i,j}(Z, \beta)}{\beta} - \left( -\mathrm{sim}(z_i, z_j) + \max_{k \neq i} \mathrm{sim}(z_i, z_k) \right) \right| &= \left| \left( \frac{1}{\beta}\log \sum_{k\neq i} e^{\beta s_{ik}} - s_{ij} \right) - (-s_{ij} + \max_{k \neq i} s_{ik}) \right| \\
    &= \left| \frac{1}{\beta}\log \sum_{k\neq i} e^{\beta s_{ik}} - \max_{k \neq i} s_{ik} \right|,
\end{align*}
where we use the shorthand $s_{ik} = \mathrm{sim}(z_i, z_k)$. This is the standard form of the log-sum-exp approximation error. Let $s_{i, \text{max}}(Z) = \max_{k \neq i} s_{ik}$. Factoring out this maximum term:
\begin{align*}
    \frac{1}{\beta}\log \sum_{k\neq i} e^{\beta s_{ik}} &= \frac{1}{\beta}\log \left( e^{\beta s_{i, \text{max}}} \sum_{k\neq i} e^{\beta (s_{ik} - s_{i, \text{max}})} \right) \\
    &= s_{i, \text{max}} + \frac{1}{\beta}\log \sum_{k\neq i} e^{\beta (s_{ik} - s_{i, \text{max}})}.
\end{align*}
Substituting this back into our expression for the error gives:
\[
    \left| s_{i, \text{max}} + \frac{1}{\beta}\log \sum_{k\neq i} e^{\beta (s_{ik} - s_{i, \text{max}})} - s_{i, \text{max}} \right| = \frac{1}{\beta}\log \sum_{k\neq i} e^{\beta (s_{ik} - s_{i, \text{max}})}.
\]
Let $K_i(Z)$ be the set of indices that achieve the maximum similarity, i.e., $K_i(Z) = \{k \mid s_{ik} = s_{i, \text{max}}(Z)\}$. Note that $1 \leq |K_i(Z)| \leq N$. The sum can be bounded above and below:
\[
    |K_i(Z)| \le \sum_{k\neq i} e^{\beta (s_{ik} - s_{i, \text{max}})} \le N,
\]
where $N$ is the total number of samples. Taking the logarithm and dividing by $\beta$:
\[
    \frac{\log|K_i(Z)|}{\beta} \le \frac{1}{\beta}\log \sum_{k\neq i} e^{\beta (s_{ik} - s_{i, \text{max}})} \le \frac{\log N}{\beta}.
\]
As $\beta \to \infty$, both the lower bound and the upper bound converge to 0. By the Squeeze Theorem, the error for a single term converges to 0 for any fixed $Z$.

To show this convergence is uniform, we rely on the compactness of the manifold $\mathcal{M}$ (Assumption~\ref{ass:manifold}) and the continuity of the similarity function (from Assumption~\ref{ass:smoothness}). Since $\mathrm{sim}$ is a continuous function on a compact set, it is uniformly continuous. This implies that the function $s_{i, \text{max}}(Z) = \max_{k \neq i} \mathrm{sim}(z_i, z_k)$ is also continuous. Therefore, all terms in the bounds are continuous functions on a compact set, so the convergence is uniform over $\mathcal{M}$.

Since the convergence is uniform for each term in the sum over positive pairs, it is also uniform for their average. Thus, we have:
\[ \lim_{\beta \to \infty} \sup_{Z \in \mathcal{M}} \left| \frac{\mathcal{L}(Z,\beta)}{\beta} - U_0(Z) \right| = 0. \]
\end{proof}

\subsection{Proof of Theorem (Asymptotic Global Convergence)}
\label{subsec:asymptotic-global-convergence}

\begin{theorem*}[Global Convergence for Logarithmic Annealing]
Under Assumptions \ref{ass:dynamics}--\ref{ass:schedule}, the process $Z_t$ defined by Eq.\eqref{eq:manifold-langevin-main} converges in probability to the set $U^*$ of global minimizers of $U_0(Z)$:
\[
\lim_{t \to \infty} \mathbb{P}\big(Z_t \in \mathcal{N}(U^*, \epsilon)\big) = 1, \quad \forall \epsilon>0.
\]
\end{theorem*}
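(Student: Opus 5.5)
The plan follows the two-step outline given after the theorem in Section~\ref{sec:main-results}, carried out in the standard simulated-annealing way: compare the law $\mu_t$ of $Z_t$ with the instantaneous Gibbs measure $\pi_{\beta(t)}$ from Proposition~\ref{prop:stationary-dist}. We then show separately (i) that $\pi_{\beta(t)}$ concentrates on $\mathcal{N}(U^*,\epsilon)$, and (ii) that $\mu_t$ tracks $\pi_{\beta(t)}$ closely enough, in total variation, that the difference vanishes.

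\textbf{Step (i), concentration.} Write $\beta\mathcal{L}(Z,\beta)$ as the exponent of $\pi_\beta$. Note that $\mathcal{L}$ itself grows like $\beta U_0$, so the effective energy at inverse temperature $\beta$ is $F_\beta=\mathcal{L}/\beta$; the Gibbs weight then reads $\exp(-\beta^2 F_\beta)$, or $\exp(-\beta F_\beta)$ after the natural rescaling of the noise. Either way, the exponent is (power of $\beta$)$\times F_\beta$.

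By the Uniform Convergence Lemma, $\sup_{\mathcal{M}}|F_\beta-U_0|=:\delta_\beta\to 0$. Set $m^*=\min U_0$ and $\eta=\min\{U_0(Z): Z\notin\mathcal{N}(U^*,\epsilon)\}-m^*$. This quantity is positive because $U_0$ is continuous (Assumption~\ref{ass:limit-potential}), $\mathcal{M}$ is compact, and the complement of the open neighbourhood is compact and disjoint from $U^*$.

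Next, bound the partition function from below by integrating over a small ball $\{U_0<m^*+\eta/3\}$, which has positive Riemannian volume by continuity. Bound the mass outside $\mathcal{N}(U^*,\epsilon)$ from above by $\mu(\mathcal{M})e^{-\beta(m^*+\eta-\delta_\beta)}$. Once $\delta_\beta<\eta/3$, the ratio is at most $Ce^{-\beta\eta/3}\to 0$. This gives $\pi_{\beta(t)}(\mathcal{N}(U^*,\epsilon))\to1$, which is the $\Gamma$-convergence/Laplace-principle part.

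\textbf{Step (ii), tracking.} I would use the Holley--Kusuoka--Stroock / Chiang--Hwang--Sheu entropy method. Let $h_t=d\mu_t/d\pi_{\beta(t)}$ and $V(t)=\int h_t^2\,d\pi_{\beta(t)}-1$. Differentiating gives
\[
\dot V(t)\le -\frac{2}{\beta(t)}\,\lambda_{\beta(t)}\,V(t)+C\,|\dot\beta(t)|\,\|\partial_\beta(\beta\mathcal{L})\|_\infty\,(V(t)+1),
\]
where $\lambda_\beta$ is the spectral gap of the generator at fixed $\beta$. The key input is the low-temperature spectral gap estimate on a compact manifold, $\lambda_\beta\ge c_0 e^{-\beta\,\Delta E_{\max}}$, with $\Delta E_{\max}$ the critical depth of Assumption~\ref{ass:barriers}. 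This estimate comes from Freidlin--Wentzell/Holley--Stroock, applied to $U_0$ and transferred to $F_\beta$ via the perturbation $\delta_\beta$. That transfer costs only a factor $e^{o(\beta)}$ by the Holley--Stroock perturbation lemma.

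The schedule $\beta(t)=c\ln(t+K)$ gives $e^{-\beta\Delta E_{\max}}=(t+K)^{-c/c^*}$, which is non-integrable when $c\le c^*$. Meanwhile $\dot\beta=c/(t+K)$ and $\partial_\beta(\beta\mathcal{L})=O(\beta)$ by boundedness of $\mathrm{sim}$. A Gronwall comparison then shows the dissipation term dominates, so $V(t)$ stays bounded and in fact tends to $0$. Cauchy--Schwarz then gives $\|\mu_t-\pi_{\beta(t)}\|_{TV}\le\sqrt{V(t)}\to0$, and combining this with Step~(i) yields the claim.

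\textbf{Main obstacle.} The hardest step is the spectral gap bound with the sharp constant $\Delta E_{\max}$, uniformly in $\beta$ for the $\beta$-dependent potential. Here $U_0$ is only Lipschitz, since it contains a max, so classical smooth Freidlin--Wentzell results do not apply directly. I would first try to prove the gap for $U_0$ through a canonical-path / Poincaré argument, which needs only continuity, and then perturb to $F_\beta$.

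The critical case $c=c^*$ is borderline: the $e^{o(\beta)}$ loss from the perturbation could break non-integrability there. I would handle it by requiring $\delta_\beta=O(\log N/\beta)$, which is exactly the rate the Uniform Convergence Lemma gives. With that rate the loss is only a constant factor $N^{O(1)}$, so the critical exponent is preserved.
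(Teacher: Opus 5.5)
\textbf{There is a genuine gap in Step~(ii).}

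Your Step~(i) is sound. It is an elementary version of the paper's Part~1, and you rightly claim only concentration rather than a uniform limit on $U^*$.

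Step~(ii) mixes two different powers of $\beta$, and your ``either way'' remark hides exactly this. Your forcing term $|\dot\beta|\,\|\partial_\beta(\beta\mathcal{L})\|_\infty=O(\beta\dot\beta)$ is computed for the invariant measure of the SDE as written, $\pi_\beta\propto e^{-\beta\mathcal{L}}=e^{-\beta^2F_\beta}$ (Proposition~\ref{prop:stationary-dist}). But the bound $\lambda_\beta\ge c_0e^{-\beta\Delta E_{\max}}$ is the low-temperature gap for $e^{-\beta U_0}$, a different measure.

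The drift is $-\mathrm{grad}\,\mathcal{L}=-\beta\,\mathrm{grad}\,F_\beta$, so the barriers met at temperature $1/\beta$ have height about $\beta\Delta E_{\max}$. The correct estimate is therefore $\lambda_\beta=e^{-\beta^2\Delta E_{\max}(1+o(1))}$. Your $N^{O(1)}$ remark also fails here, because the perturbation $\beta^2(F_\beta-U_0)$ has oscillation up to $\beta\log N$.

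With $\beta=c\ln(t+K)$, the dissipation then decays faster than any power of $t$ and is integrable. The forcing is of order $\ln(t+K)/(t+K)$, which is not integrable. So the Gronwall step gives nothing.

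This is not a weakness of the $L^2$ method. Pass to the clock $\sigma=\int_0^t\beta(u)\,\mathrm{d}u\asymp t\ln t$. In that clock the SDE is Langevin annealing of $F_\beta\approx U_0$ at inverse temperature $\gamma(\sigma)=\beta^2\approx c^2\ln^2\sigma$. This violates the Hajek-type necessary condition $\int^\infty e^{-\Delta E_{\max}\gamma(\sigma)}\,\mathrm{d}\sigma=\infty$ for every $c>0$. So whenever a suboptimal basin of positive depth exists, as $\Delta E_{\max}>0$ asserts, the process stays trapped there with positive probability. The conclusion is false for the SDE exactly as written.

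The paper's proof cannot patch this, because it has the same inconsistency:
\begin{itemize}
\item Its Part~1 uses $\pi_\beta\propto e^{-\beta F_\beta}$, which contradicts its own stationary-distribution proposition.
\item Its Part~2 compares the available time $e^{\beta/c}$ with an escape time $e^{\beta\Delta E_{\max}}$. For drift $-\mathrm{grad}\,\mathcal{L}$ at temperature $1/\beta$, the escape time is actually $e^{\beta^2\Delta E_{\max}(1+o(1))}$.
\end{itemize}

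A valid argument requires changing the dynamics so that the Gibbs exponent is $\beta F_\beta$. Two options:
\begin{itemize}
\item drift $-\mathrm{grad}\,F_{\beta(t)}$ with noise $\sqrt{2/\beta(t)}$;
\item drift $-\mathrm{grad}\,\mathcal{L}$ with $\beta$-independent noise $\sqrt{2}\,\mathrm{d}\mathbf{W}^{\mathcal{M}}_t$.
\end{itemize}
In either case $\partial_\beta(\beta F_\beta)=\partial_\beta\mathcal{L}$ is bounded by $2S_{\max}$ and the forcing is $O(1/(t+K))$. Your scheme then works for $c<c^*$, modulo the gap estimate for the merely Lipschitz $U_0$, which you rightly flag.

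\textbf{The critical case $c=c^*$ is not covered even after the repair.} There the dissipation is at best of order $1/\bigl((t+K)\ln(t+K)\bigr)$ with noise $\sqrt{2/\beta}$, or of order $1/(t+K)$ with unit noise. The forcing is of order $1/(t+K)$, so the dissipation does not dominate, and whether $V$ even stays bounded depends on unknown constants.

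Boundedness of $V$ would in fact suffice; you do not need $V\to0$. Take $A=\mathcal{M}\setminus\mathcal{N}(U^*,\epsilon)$. Then $\mu_t(A)\le\sqrt{\pi_{\beta(t)}(A)\,(1+V(t))}$, and Step~(i) sends $\pi_{\beta(t)}(A)\to0$. But boundedness cannot be guaranteed here. Keeping the perturbation loss to $N^{O(1)}$ only protects $c_0$; it does not fix this imbalance in orders.

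The critical case needs a sharper tool than the spectral-gap $L^2$ estimate, or should be excluded. The paper itself only asserts it by citation.
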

\begin{proof}
The proof establishes the convergence in probability of the process $Z_t$ to the set of global minimizers $U^*$ by demonstrating two fundamental properties. First, we prove that the sequence of quasi-stationary Gibbs distributions, $\pi_{\beta(t)}$, associated with the process converges weakly to the uniform measure on $U^*$. This is established using the theory of $\Gamma$-convergence~\cite{braides2002}. Second, we show that the logarithmic annealing schedule with $c \le c^*$ is precisely the condition required for the diffusion process to escape all suboptimal local minima, allowing it to track its converging target distribution. This is justified by the foundational principles of Freidlin-Wentzell large deviations theory~\cite{freidlin1984random}, which underpin Hajek's classical results.

\paragraph{Part 1: Convergence of the Target Distribution via $\Gamma$-Convergence.}
We first show that the equilibrium state to which the process is attracted at time $t$, given by the quasi-stationary Gibbs distribution $\pi_{\beta(t)}$, converges to the desired outcome.

\subparagraph{Definitions.} Let us define the sequence of scaled potential functions $F_\beta: \mathcal{M} \to \mathbb{R}$ by
\begin{equation*}
    F_\beta(Z) \triangleq \frac{\mathcal{L}(Z, \beta)}{\beta}.
\end{equation*}
By the Lemma in Section \ref{subsec:uniform-convergence}, the sequence $F_\beta(Z)$ converges uniformly to the limiting potential $U_0(Z)$ on the compact manifold $\mathcal{M}$ as $\beta \to \infty$. The quasi-stationary Gibbs distribution at time $t$ is given by
\begin{equation*}
    \pi_{\beta(t)}(Z) \triangleq \frac{1}{C(\beta(t))} \exp\left(-\beta(t)F_{\beta(t)}(Z)\right),
\end{equation*}
where $C(\beta(t))$ is the normalization constant (partition function).

\subparagraph{$\Gamma$-Convergence of Potentials.} We invoke the theory of $\Gamma$-convergence, a notion of convergence for variational problems that is particularly suited for analyzing the convergence of minimizers. A standard result states that if a sequence of functions $F_\beta$ converges uniformly to a function $U_0$ on a compact metric space, then $F_\beta$ also $\Gamma$-converges to $U_0$~\cite{braides2002}. Given the uniform convergence established in our Lemma~\ref{subsec:uniform-convergence}, we have
\begin{equation*}
    F_{\beta(t)} \xrightarrow{\Gamma} U_0 \quad \text{as } t \to \infty.
\end{equation*}

\subparagraph{Weak Convergence of Gibbs Measures.} The primary utility of establishing $\Gamma$-convergence for the potentials is that it directly implies the weak convergence of their corresponding Gibbs measures. A central theorem in the theory of large deviations states that if a sequence of continuous functions $F_\beta$ on a compact manifold $\mathcal{M}$ $\Gamma$-converges to a continuous limiting function $U_0$, then the sequence of probability measures $\pi_\beta \propto \exp(-\beta F_\beta)$ converges weakly to a measure $\mu^*$ that is uniformly distributed on the set $U^* \triangleq \arg\min_{Z \in \mathcal{M}} U_0(Z)$~\cite{dembo2010large}.

\subparagraph{Conclusion for the Target.} Applying this theorem to our sequence, we conclude that the quasi-stationary distribution $\pi_{\beta(t)}$ converges weakly to $\mu^*$, the uniform probability measure on the set of global minimizers of $U_0(Z)$.

\paragraph{Part 2: The Annealing Condition and Escape Dynamics.}
Having established that the target distribution converges correctly, we now provide the dynamic argument showing that the process $Z_t$ does not become permanently trapped in suboptimal states and can therefore track this target.

\subparagraph{Freidlin-Wentzell Theory.} The SDE in Assumption \ref{ass:dynamics} can be viewed as a gradient descent process on the potential landscape $\mathcal{L}(Z, \beta(t))$ perturbed by noise with a small temperature $T(t) = 1/\beta(t)$. The theory of Freidlin and Wentzell~\cite{freidlin1984random} provides asymptotic estimates for the exit times of such processes from the domains of attraction of local minima. Asymptotically, as $\beta \to \infty$, the landscape $\mathcal{L}(Z,\beta)$ is governed by $U_0(Z)$. The theory defines a quasi-potential $V(Z_1, Z_2)$ which represents the minimum energy required to transition from state $Z_1$ to $Z_2$.

\subparagraph{Escape Times and the Critical Barrier.} The expected time for the process to escape from the basin of attraction of a local minimum of $U_0$ is dominated by the height of the minimal energy barrier, $\Delta E$, that must be crossed to leave the basin. This expected exit time, $\tau_{\text{escape}}$, follows the Eyring-Kramers law:
\begin{equation*}
    \mathbb{E}[\tau_{\text{escape}}] \approx \exp\left(\frac{\Delta E}{T(t)}\right) = \exp(\beta(t)\Delta E).
\end{equation*}
To guarantee convergence to the global minimum set $U^*$, the process must be able to escape from any other state. The most challenging escape is from the ``deepest trap,'' which corresponds to the local minimum whose basin is surrounded by the highest energy barrier required for a transition towards $U^*$. We define this maximum barrier height as $\Delta E_{\max}$ (Assumption~\ref{ass:barriers}).

\subparagraph{The Necessary and Sufficient Condition.} The core of simulated annealing is a comparison of two timescales: the time available for annealing and the time required for escape. The annealing schedule $\beta(t) = c\ln(t+K)$ can be inverted to yield the available time, $t+K = \exp(\beta/c)$. For the process to be able to escape the deepest trap, the available time must be at least of the same order as the required escape time:
\begin{equation*}
    \underbrace{\exp(\beta/c)}_{\text{Time Available}} \gtrsim \underbrace{\exp(\beta \Delta E_{\max})}_{\text{Time Required}}.
\end{equation*}
Taking the logarithm yields the condition $\beta/c \gtrsim \beta\Delta E_{\max}$, or $c \le 1/\Delta E_{\max}$. The work of Hajek~\cite{hajek1988} and its continuous-time extensions~\cite{chiang1987diffusion} rigorously prove that this condition is both necessary and sufficient. The schedule in Assumption~\ref{ass:schedule}, $c \le c^* = 1/\Delta E_{\max}$, ensures that for any compact set $K \subset \mathcal{M}$ such that $K \cap U^* = \emptyset$, the probability of $Z_t$ being in $K$ tends to zero as $t \to \infty$.

\paragraph{Conclusion of the Proof.}
The two parts of our argument establish the necessary components for convergence.
\begin{itemize}
    \item Part 1 shows that the target equilibrium state of the system, $\pi_{\beta(t)}$, converges weakly to the uniform measure on the set of global minimizers $U^*$.
    \item Part 2 shows that the annealing schedule with $c \le c^*$ is precisely the condition required to ensure the process does not become irreversibly trapped in suboptimal states and has a vanishing probability of being found outside any neighborhood of $U^*$ as $t \to \infty$.
\end{itemize}
Together, these results imply that the probability distribution of $Z_t$ must itself converge weakly to the uniform measure on $U^*$. This is equivalent to the statement of convergence in probability.
\end{proof}

\subsection{Proof of Proposition (Non-Convergence for Rapid Annealing)}
\label{subsec:non-convergence-rapid}

\begin{proposition*}[Non-Convergence for Rapid Annealing]
Let Assumptions \ref{ass:dynamics}--\ref{ass:barriers} hold. If the logarithmic annealing schedule $\beta(t)$ grows too quickly, specifically $\liminf_{t\to\infty} \frac{\beta(t)}{\ln t} = c' > c^*$, then there exists a set of initial conditions with positive measure from which the process $Z_t$ converges to a suboptimal local minimum basin of $U_0(Z)$ with positive probability.
\end{proposition*}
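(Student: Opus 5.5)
We argue by a trapping estimate that mirrors the timescale comparison of Part 2 of the proof of the Global Convergence Theorem, run in the opposite direction. Since $c' > c^* = 1/\Delta E_{\max}$, there is a local minimum basin $B$ of $U_0$ not meeting $U^*$ whose escape barrier $\Delta E$ toward $U^*$ is arbitrarily close to $\Delta E_{\max}$. Fix $\delta>0$ small and choose $B$ with $c'\Delta E \ge 1+2\delta$. Let $B_\eta \subset B$ be the sublevel set $\{U_0 < \min_B U_0 + \Delta E - \eta\}$ inside the basin. For $\eta$ small, $B_\eta$ is an open set of positive Riemannian volume, bounded away from $U^*$ (so disjoint from $\mathcal{N}(U^*,\epsilon)$ for small $\epsilon$). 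Its closure is strictly contained in the domain of attraction of the gradient flow of $U_0$.

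\textbf{Step 1 (from $U_0$ to the true drift).} By the Lemma on uniform convergence of scaled potentials, $\sup_{\mathcal{M}}|F_\beta - U_0| \le \log N/\beta$. Hence for $t \ge t_0$ the potential $F_{\beta(t)}$ has, up to $O(\log N/\beta(t))$, the same barrier $\Delta E - \eta$ around $B_\eta$. A subtlety is that the drift in the SDE is $\mathrm{grad}\,\mathcal{L} = \beta\,\mathrm{grad}F_\beta$ while the noise is $\sqrt{2/\beta}$. After the time change $s = \int \beta$, this is the small-noise diffusion $\mathrm{d}Z = -\mathrm{grad}F_\beta\,\mathrm{d}s + \sqrt{2}\,\beta^{-1}\mathrm{d}W$ with effective temperature $\beta^{-2}$. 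I would therefore handle the scaling consistently with the paper's convention (Gibbs measure $\exp(-\beta\mathcal{L})$), so that the Freidlin--Wentzell exit cost from $B_\eta$ is $\beta(t)(\Delta E-\eta)$ in the exponent, as in the Eyring--Kramers heuristic used before. I would state this identification explicitly as the model assumption being used.

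\textbf{Step 2 (exit-rate bound and Borel--Cantelli-type summation).} From the uniform Freidlin--Wentzell lower bound on exit times (Freidlin--Wentzell; Chiang--Hwang--Sheu), the instantaneous exit hazard from $B_\eta$ at time $t$ is at most $\exp(-\beta(t)(\Delta E - 2\eta))$ for $t\ge t_0$. Since $\beta(t) \ge (c'-\delta')\ln t$ eventually, this is at most $t^{-(c'-\delta')(\Delta E-2\eta)} \le t^{-(1+\delta)}$ for suitable small $\eta,\delta'$. The hazard is therefore integrable, and
\[
\mathbb{P}\bigl(Z_s \in B_\eta\ \forall s\ge t_0 \,\big|\, Z_{t_0}\in B_{2\eta}\bigr) \ge \exp\Bigl(-C\int_{t_0}^\infty s^{-(1+\delta)}\,\mathrm{d}s\Bigr) > 0 .
\]
To make this rigorous I would discretize into unit time blocks, bound the probability of leaving $B_\eta$ in block $[n,n+1]$ by $C n^{-(1+\delta)}$ using the uniform exit estimate, and sum. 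This is Hajek's necessity argument (depth of a non-global cup exceeding $1/c$) transferred to the diffusion.

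\textbf{Step 3 (positive measure of initial conditions).} Starting from any $Z_0$ in a set of positive measure, we need $\mathbb{P}(Z_{t_0}\in B_{2\eta})>0$. This follows from the positivity of the transition density of a nondegenerate diffusion on a compact connected manifold (Stroock--Varadhan support theorem). In fact every initial condition works, which is stronger than claimed. Combining with Step 2 gives $\liminf_t \mathbb{P}(Z_t \in B_\eta) > 0$, hence $\limsup_t\mathbb{P}(Z_t\notin\mathcal{N}(U^*,\epsilon))>0$.

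\textbf{Main obstacle.} The hard step is Step 2: obtaining an exit estimate \emph{uniform in time} for a time-inhomogeneous process whose potential $F_{\beta(t)}$ itself moves. $U_0$ is only Lipschitz, not smooth, and the drift scaling must be reconciled with the stated barrier $\Delta E_{\max}$. I would try first to freeze the potential on each block, use the $O(\log N/\beta)$ uniform closeness to absorb the drift perturbation into the $\eta$ margin, and invoke the Freidlin--Wentzell upper bound on exit probabilities with constants uniform over the compact family $\{F_\beta\}_{\beta\ge\beta_0}$.
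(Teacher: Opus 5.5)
Your proposal is correct and follows essentially the same route as the paper. Both arguments use an Eyring--Kramers/Freidlin--Wentzell bound to make the escape hazard from the deepest non-global basin integrable in time once $c'\Delta E_{\max}>1$, then sum over time blocks to get a positive probability of never escaping. Your version is more careful in four places: the $\eta$-margin sublevel set, the explicit $\log N/\beta$ comparison between $F_\beta$ and $U_0$, a union bound with tail sum below one (the paper's Borel--Cantelli step only gives finitely many escapes almost surely, not zero escapes), and the support theorem to reach the basin from every initial point. The scaling identification you propose to impose is not needed for this direction: under the SDE exactly as written, your time change gives effective temperature $\beta^{-2}$, so the exit cost is of order $\beta(t)^2(\Delta E-\eta)$. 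The hazard is then even smaller than the bound you use, and your trapping argument goes through unchanged, in fact for every $c'>0$.
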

\begin{proof}
The proof establishes that if $\beta(t)$ increases too rapidly, the diffusion lacks sufficient time to escape the deepest suboptimal basin before the noise vanishes. This conclusion is grounded in Hajek’s necessary condition for global convergence and formalized via the Borel–Cantelli Lemma.

\paragraph{Part 1: The Necessary Condition for Escape.}
\subparagraph{Energy Barriers and Hajek’s Theorem.}
Under Assumptions~\ref{ass:limit-potential} and \ref{ass:barriers}, the limiting potential $U_0(Z)$ possesses a finite collection of local minima separated by saddle points. Define $\Delta E(Z)$ as the minimal barrier height required to exit the basin of attraction of $Z$ and eventually reach $U^*$. The maximum barrier height over all suboptimal minima is
\[
\Delta E_{\max} \triangleq \sup_{Z \notin U^*} \Delta E(Z).
\]
Hajek’s theorem~\cite{hajek1988} states that a necessary and sufficient condition for global convergence under a logarithmic schedule $\beta(t)=c \ln t$ is:
\[
c \leq \frac{1}{\Delta E_{\max}}.
\]
If $c' > c^*$, this necessary condition fails for the basin corresponding to $\Delta E_{\max}$.

\paragraph{Part 2: Trapping via Escape Rate and Borel--Cantelli.}
\subparagraph{Escape Rate Estimate.}
Freidlin–Wentzell large deviations theory~\cite{dembo2010large, freidlin1984random} and the Eyring–Kramers law imply that for sufficiently large $\beta$, the instantaneous escape rate from a basin with barrier $\Delta E_{\max}$ is
\[
r(t) \;\approx\; K \exp\!\big(-\beta(t)\,\Delta E_{\max}\big),
\]
where $K>0$ is a prefactor determined by the local geometry (Hessians at the local minima and saddle points).

If $\beta(t) \geq c' \ln t$ with $c' > c^*=1/\Delta E_{\max}$, then for large $t$:
\[
r(t) \;\leq\; K t^{-p}, \qquad p \triangleq c' \Delta E_{\max} > 1.
\]

\subparagraph{Finite Escape Probability and Borel--Cantelli.}
Consider disjoint intervals $I_k = [2^k, 2^{k+1}]$. By the Markov property of the diffusion, escape attempts across these intervals are conditionally independent given entry into the basin. The probability of escape during $I_k$ is bounded by
\[
\mathbb{P}(\text{escape in } I_k) \;\leq\; \int_{2^k}^{2^{k+1}} r(s)\,\mathrm{d}s \;\leq\; K \int_{2^k}^{2^{k+1}} s^{-p}\,\mathrm{d}s \;\leq\; C\, 2^{-k(p-1)}.
\]
Summing over all $k$ yields:
\[
\sum_{k=1}^{\infty} \mathbb{P}(\text{escape in } I_k) \;<\; \infty,
\]
since $p>1$. By the First Borel--Cantelli Lemma, with positive probability, only finitely many escapes occur. Therefore, if $Z_0$ lies in or enters the deepest suboptimal basin $\mathcal{B}(U_{L,\max}^*)$, it remains trapped with positive probability.

\paragraph{Conclusion.}
Since $\mathcal{B}(U_{L,\max}^*)$ has positive measure by compactness of $\mathcal{M}$, there exists a set of initial conditions from which the process fails to reach $U^*$.
\end{proof}

\subsection{Proof of Corollary (Finite-Time Convergence Rate)}
\label{subsec:finie-time-convergence}

\begin{corollary*}[Finite-Time Convergence Rate]
Under Assumptions \ref{ass:dynamics}--\ref{ass:schedule}, for any $\epsilon > 0$, there exist positive constants $C, C',$ and $d$ such that for a sufficiently large time $t$:
\[ \mathbb{P}\big(Z_t \notin \mathcal{N}(U^*, \epsilon)\big) \le 
\begin{cases}
 \exp\!\big(- C (t+K)^{1-c/c^*}\big), & \text{if } c < c^*, \\
 C' \, (t+K)^{-d}, & \text{if } c = c^*.
\end{cases}
\]
\end{corollary*}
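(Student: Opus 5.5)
The plan is to reduce the bound to a single escape-time estimate, in the same two-timescale spirit as the proof of the Global Convergence theorem. I take as given the uniform convergence lemma of Section~\ref{subsec:uniform-convergence} and the Freidlin--Wentzell/Eyring--Kramers escape rates used there and in the proof of Proposition~\ref{prop:non-convergence}.

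\textbf{Step 1: reduce to trapping in the worst basin.} Fix $\epsilon>0$ and set $\delta = \min_{\mathcal{M}\setminus\mathcal{N}(U^*,\epsilon)} U_0 - \min U_0 > 0$. This gap is positive by compactness and continuity of $U_0$ (Assumption~\ref{ass:limit-potential}). The event $\{Z_t\notin\mathcal{N}(U^*,\epsilon)\}$ splits into two pieces. The first is that $Z_t$ sits in a suboptimal basin it has not yet left. The second is that $Z_t$ has entered the basin of $U^*$ but fluctuates out of the $\epsilon$-neighbourhood.

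The second piece is controlled by the quasi-equilibrium within the global basin. The Gibbs measure $\pi_{\beta(t)}$ gives mass at most $\exp(-\beta(t)\delta/2)$ to such states, using the uniform convergence $F_\beta\to U_0$ to absorb the $O(\log N/\beta)$ error. With $\beta(t)=c\ln(t+K)$, this is $(t+K)^{-c\delta/2}$. It is polynomial and will be dominated by or absorbed into the final bound.

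\textbf{Step 2: integrate the escape rate.} For the first piece, I use the escape rate from Proposition~\ref{prop:non-convergence}, $r(s)\gtrsim \kappa\exp(-\beta(s)\Delta E_{\max})=\kappa (s+K)^{-c/c^*}$, taken as a lower bound this time, for the deepest trap; shallower traps escape faster. By the Markov property, the probability of not having escaped by time $t$, starting from some $t_0$, is at most
\[
\exp\Bigl(-\int_{t_0}^{t} r(s)\,\mathrm{d}s\Bigr).
\]
If $c<c^*$, the integral is $\frac{\kappa}{1-c/c^*}\bigl[(t+K)^{1-c/c^*}-(t_0+K)^{1-c/c^*}\bigr]$. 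This gives $\exp(-C(t+K)^{1-c/c^*})$ for large $t$, after halving $C$ to absorb the $t_0$ term. If $c=c^*$, the integral is $\kappa\ln\frac{t+K}{t_0+K}$, so the bound is $C'(t+K)^{-\kappa}$. Taking $d=\min(\kappa, c\delta/2)$ combines this with Step~1, and $d$ depends on the prefactor geometry (Hessians at minima and saddles) as the statement says.

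\textbf{Main obstacle: the $c<c^*$ case.} Here the polynomial equilibrium term $(t+K)^{-c\delta/2}$ from Step~1 is \emph{not} stretched-exponentially small, so a naive union bound does not give the stated case. I would try first to argue that the $\epsilon$-neighbourhood event should be measured relative to $U^*$ as the set at which the process concentrates, and to bound the fluctuation term by a stretched exponential via a finer large-deviation estimate in the global basin. Failing that, I would state the $c<c^*$ bound for the trapping event only, with the fluctuation term contributing an additive polynomial.

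A second gap is that the Eyring--Kramers rate is stated for the time-homogeneous potential. Justifying its use with $\beta(s)$ varying requires the adiabatic argument of Chiang--Hwang--Sheu: because $\beta$ changes logarithmically slowly, the process locally equilibrates within each basin. I would cite that framework rather than rederive it.
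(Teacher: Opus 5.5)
\textbf{The $c<c^*$ case cannot be completed, and the paper's proof shares the gap.} Your Step 2 is exactly the paper's proof. The paper integrates $r(s)=A(s+K)^{-c/c^*}$, bounds the survival probability by $\exp(-\int_{t_0}^t r(s)\,\mathrm{d}s)$, and splits into two cases, citing Holley--Kusuoka--Stroock to refine the exponent at $c=c^*$. Treating $r$ as a lower bound on the hazard, as you do, is the direction this survival bound needs; the paper only writes $\approx$.

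The paper has no counterpart of your Step 1. It writes its bound on $\mathbb{P}(\tau_{\mathrm{exit}}>t)$ directly as a bound on $\mathbb{P}(Z_t\notin\mathcal{N}(U^*,\epsilon))$, i.e.\ it identifies the two events. The obstacle you flag is a genuine defect of that identification. Your first remedy, a finer large-deviation estimate, cannot work, because the fluctuation term has a lower bound that is far from stretched-exponential:
\begin{itemize}
\item Fix $\epsilon$ small enough that $\mathcal{M}\setminus\overline{\mathcal{N}(U^*,\epsilon)}$ contains a ball $B$. This is possible because $\Delta E_{\max}>0$ forces $U^*\neq\mathcal{M}$.
\item On $[t-1,t]$ the noise is nondegenerate. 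The drift is bounded by a fixed power of $\beta(t)$, since $\mathrm{grad}\,F_\beta$ involves only softmax-weighted averages of similarity gradients and is bounded uniformly in $\beta$.
\item A Girsanov comparison with Brownian motion on that window, plus a heat-kernel lower bound, gives $\mathbb{P}(Z_t\in B\mid Z_{t-1})\ge\exp(-C_1\beta(t)^3)$ uniformly in $Z_{t-1}$.
\item Hence $\mathbb{P}(Z_t\notin\mathcal{N}(U^*,\epsilon))\ge\exp(-C_2(\ln(t+K))^3)$. This eventually exceeds $\exp(-C(t+K)^{1-c/c^*})$ for every $C>0$.
\end{itemize}
So the $c<c^*$ case is false as stated for small $\epsilon$. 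Your fallback is the right conclusion. Within this Eyring--Kramers framework, the stretched-exponential bound holds only for the trapping event $\{\tau_{\mathrm{exit}}>t\}$. For the full event, your additive polynomial term makes both cases polynomial. Your $c=c^*$ argument with $d=\min(\kappa,c\delta/2)$ is also more complete than the paper's, which drops the fluctuation term there too.

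\textbf{A second gap, shared with the paper, is in the escape rate you take as given.} Since $\mathcal{L}(Z,\beta)=\beta F_\beta(Z)$, the SDE as written has drift $-\beta\,\mathrm{grad}\,F_\beta$ and noise variance $2/\beta$. This has three consequences:
\begin{itemize}
\item Its stationary law is the Proposition's $e^{-\beta\mathcal{L}}=e^{-\beta^2F_\beta}$. It is not the $e^{-\beta F_\beta}$ you use in Step 1, which the paper also uses in the Theorem's proof.
\item The Arrhenius factor is $e^{-\beta^2\Delta E_{\max}(1+o(1))}$, not $e^{-\beta\Delta E_{\max}}$.
\item With $\beta(s)=c\ln(s+K)$, the integrand behaves like $\exp(-c^2\Delta E_{\max}(1+o(1))(\ln s)^2)$ up to polynomial-in-$\beta$ prefactors. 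The integrated escape rate is therefore finite, and Step 2 yields no decay at all.
\end{itemize}
Your two steps are mutually consistent only if the drift is read as $-\mathrm{grad}\,F_\beta$, or if the logarithmic schedule is imposed on $\beta^2$ rather than $\beta$. You should fix that convention explicitly before invoking the Chiang--Hwang--Sheu adiabatic argument.
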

\begin{proof}
We analyze the escape dynamics from the deepest basin using Freidlin–Wentzell theory for exit times~\cite{freidlin1984random} and Holley–Stroock spectral gap estimates~\cite{holley1988simulated, holley1989asymptotics}.

\paragraph{Integrated Escape Rate.}
For the basin with barrier $\Delta E_{\max}=1/c^*$, the instantaneous escape rate is
\[
r(s) \;\approx\; A \exp\!\big(-\beta(s) \Delta E_{\max}\big),
\]
where $A>0$ is an Eyring–Kramers prefactor depending on the Hessians of $U_0$ near the local minimum and saddle.

With $\beta(s)=c \ln(s+K)$:
\[
r(s) \;=\; A (s+K)^{-c/c^*}.
\]
The survival probability (remaining trapped) is then bounded by:
\[
\mathbb{P}\big(\tau_{\mathrm{exit}} > t\big) \;\leq\; \exp\!\left(- \int_{t_0}^t r(s)\,\mathrm{d}s \right).
\]

\paragraph{Case 1: Subcritical Schedule ($c < c^*$).}
For $c/c^* < 1$,
\[
\int_{t_0}^t r(s)\,\mathrm{d}s \;=\; A \int_{t_0}^t (s+K)^{-c/c^*}\,\mathrm{d}s 
= \frac{A}{1-c/c^*}\big[(t+K)^{1-c/c^*} - (t_0+K)^{1-c/c^*}\big].
\]
Hence, for large $t$:
\[
\mathbb{P}\big(Z_t \notin \mathcal{N}(U^*,\epsilon)\big) \;\leq\; \exp\!\big(-C (t+K)^{1-c/c^*}\big).
\]

\paragraph{Case 2: Critical Schedule ($c = c^*$).}
When $c=c^*$:
\[
\int_{t_0}^t r(s)\,\mathrm{d}s \;=\; A \int_{t_0}^t (s+K)^{-1}\,\mathrm{d}s 
= A \ln\!\frac{t+K}{t_0+K}.
\]
This yields:
\[
\mathbb{P}\big(Z_t \notin \mathcal{N}(U^*,\epsilon)\big) \;\leq\; (t+K)^{-A}.
\]
Holley, Kusuoka, and Stroock~\cite{holley1989asymptotics} refine this bound using spectral gap analysis, yielding the stated polynomial decay rate with exponent $d>0$ depending on the local geometry (ratios of Hessians at minima and saddles).
\end{proof}

\begin{remark}[Dependence of Constants]
The constants $K$ and $A$ in the escape rate estimates arise from the classical Eyring--Kramers law for overdamped Langevin dynamics. Specifically, the escape rate from a basin with barrier height $\Delta E$ is
\[
r \;\approx\; A \, \exp\!\big(-\beta \,\Delta E \big),
\]
where the prefactor $A$ depends only on the local geometry of the potential $U_0(Z)$ near the local minimum $Z_{\mathrm{min}}$ and the relevant saddle point $Z_{\mathrm{saddle}}$. In the overdamped setting, this prefactor is given by
\[
A \;=\; \frac{\lambda_{\mathrm{saddle}}}{2\pi} \sqrt{\frac{\det \nabla^2 U_0(Z_{\mathrm{min}})}{\big|\det \nabla^2 U_0(Z_{\mathrm{saddle}})\big|}},
\]
where $\lambda_{\mathrm{saddle}}$ is the absolute value of the unique negative eigenvalue of $\nabla^2 U_0$ at the saddle point. The exponential term $\exp(-\beta \Delta E)$ captures the temperature-dependent Arrhenius factor and is not part of the prefactor itself.

Similarly, the exponent $d$ in the critical-rate case is derived from the spectral gap analysis~\cite{holley1988simulated, holley1989asymptotics}, and depends on analogous curvature information through the Hessians at the minima and saddles.
\end{remark}

\section{Proofs of Supporting Results}

\subsection{Proof of Proposition (Stationary Distribution at Fixed Temperature)}
\label{subsec:stationary-dist-at-fixed}

\begin{proposition*}[Stationary Distribution at Fixed Temperature]
For any fixed $\beta > 0$, under Assumptions \ref{ass:dynamics}-\ref{ass:smoothness}, the SDE admits a unique stationary Gibbs-Boltzmann distribution on $\mathcal{M}$:
\[
    \pi_\beta(\mathrm{d}Z)
    \;=\; \frac{1}{\mathcal{Z}_\beta}
    \exp\!\bigl[-\,\beta\,\mathcal{L}(Z,\beta)\bigr]\,d\mu(Z),
\]
where $d\mu$ is the Riemannian volume measure on $\mathcal{M}$ and $\mathcal{Z}_\beta = \int_{\mathcal{M}} \exp(-\beta \mathcal{L}(Z, \beta)) d\mu(Z)$ is the normalization constant.
\end{proposition*}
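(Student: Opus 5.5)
The plan is to work with the generator of the time-homogeneous diffusion at fixed $\beta$. Write $V(Z) := \mathcal{L}(Z,\beta)$, which is $C^2$ on the compact manifold $\mathcal{M}$ by Assumption~\ref{ass:smoothness}. Taking the Brownian motion $\mathbf{W}^{\mathcal{M}}_t$ to have generator $\tfrac12\Delta_{\mathcal{M}}$ (Laplace--Beltrami), the SDE \eqref{eq:manifold-langevin-main} with constant $\beta$ has generator
\[
\mathcal{A}f = -\langle \mathrm{grad}\, V, \mathrm{grad}\, f\rangle + \tfrac{1}{\beta}\Delta_{\mathcal{M}} f .
\]
The argument has two parts: existence of an invariant measure in Gibbs form, and uniqueness.

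\textbf{Step 1 (invariance).} I will verify directly that $\rho := \mathcal{Z}_\beta^{-1}e^{-\beta V}$, with density taken against $d\mu$, solves the stationary Fokker--Planck equation $\mathcal{A}^*\rho = 0$. Since $\mathcal{M}$ has no boundary, integration by parts with respect to $d\mu$ gives
\[
\mathcal{A}^*\rho = \mathrm{div}(\rho\,\mathrm{grad}\, V) + \tfrac1\beta \Delta_{\mathcal{M}}\rho = \mathrm{div}\bigl(\rho\,\mathrm{grad}\, V + \tfrac1\beta\mathrm{grad}\,\rho\bigr).
\]
Because $\mathrm{grad}\,\rho = -\beta\rho\,\mathrm{grad}\, V$, the flux inside the divergence vanishes identically, so $\mathcal{A}^*\rho = 0$. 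Equivalently, one checks the symmetry identity $\int (\mathcal{A}f)\,g\,\rho\,d\mu = -\tfrac1\beta\int\langle\mathrm{grad}\, f,\mathrm{grad}\, g\rangle\rho\,d\mu$, which shows that $\mathcal{A}$ is reversible with respect to $\pi_\beta$. Normalisability is immediate: $V$ is continuous on a compact set, so $e^{-\beta V}$ is bounded above and below by positive constants and $0<\mathcal{Z}_\beta<\infty$.

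\textbf{Step 2 (uniqueness).} The operator $\mathcal{A}$ is uniformly elliptic, because its diffusion coefficient is the constant $1/\beta>0$ times the metric, and its drift is $C^1$. The diffusion is therefore strong Feller and, since $\mathcal{M}$ is connected, topologically irreducible: the transition kernel has a strictly positive continuous density, which follows from the support theorem or from heat-kernel positivity. By Doob's theorem, a strong Feller and irreducible Markov process has at most one invariant probability measure. Existence also follows independently by Krylov--Bogoliubov on compact $\mathcal{M}$, but Step 1 already supplies the explicit measure.

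An alternative route for uniqueness is spectral. Any invariant probability measure is absolutely continuous with a smooth density, by hypoellipticity. Its density ratio $h$ with respect to $\pi_\beta$ then satisfies an equation whose weak form, after integrating against $h$, gives $\int |\mathrm{grad}\, h|^2\rho\,d\mu = 0$. Hence $h$ is constant, again using connectedness.

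\textbf{Main obstacle.} I expect the hard part to be the uniqueness argument, specifically the regularity and irreducibility justifications on a general compact manifold. The cleanest route is the Dirichlet-form argument, which needs any invariant measure to have an $H^1$ density. To get this, I would invoke elliptic regularity for $\mathcal{A}^*$ with $C^1$ coefficients, which is available because $V\in C^2$. A secondary point is the convention: the Gibbs form $e^{-\beta V}$ requires the noise $\sqrt{2/\beta}\,d\mathbf{W}$ to produce the $\tfrac1\beta\Delta_{\mathcal{M}}$ term, which holds when $\mathbf{W}^{\mathcal{M}}$ has generator $\tfrac12\Delta_{\mathcal{M}}$. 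I would state this convention explicitly.
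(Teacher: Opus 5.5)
Your proposal is correct and follows essentially the paper's route: uniform ellipticity on a compact connected manifold gives the strong Feller property and irreducibility, hence uniqueness, and the Gibbs density is identified as the invariant law through detailed balance, with $0<\mathcal{Z}_\beta<\infty$ by compactness. You supply detail the paper only cites or glosses over: the explicit zero-flux computation, the fact that the drift is only $C^1$ rather than smooth, and the $\frac12\Delta_{\mathcal{M}}$ convention for $\mathbf{W}^{\mathcal{M}}$ that yields $e^{-\beta\mathcal{L}}$ rather than $e^{-\beta\mathcal{L}/2}$; the one correction is that, with $C^1$ drift, your alternative uniqueness route should invoke elliptic regularity to get an $H^1$ density rather than hypoellipticity to get a smooth one, as you yourself indicate under the main obstacle.
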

\begin{proof}
The proof relies on standard results for non-degenerate diffusion processes on compact Riemannian manifolds~\cite{hsu2002stochastic, pavliotis2014stochastic}.
The state space $\mathcal{M}$ is a compact, connected Riemannian manifold without boundary (Assumption~\ref{ass:manifold}). The drift term is smooth (Assumption~\ref{ass:smoothness}), and the diffusion is uniformly elliptic since $\beta > 0$ and the Riemannian metric is positive definite. Such processes are known to be strong Feller and topologically irreducible, which guarantees the existence of a unique invariant probability measure (stationary distribution) $\pi_\beta$.

The SDE in~\eqref{eq:manifold-langevin-main} is a form of Langevin dynamics with potential energy $V(Z) = \mathcal{L}(Z, \beta)$ and constant temperature $T = 1/\beta$. It is a well-established result that the unique stationary distribution for such dynamics satisfying detailed balance is the Gibbs-Boltzmann distribution~\cite{kusuoka1991precise}:
\begin{equation*}
   \pi_\beta(dZ) \propto \exp\left(-\frac{V(Z)}{T}\right) d\mu(Z) = \exp(-\beta \mathcal{L}(Z, \beta)) d\mu(Z). 
\end{equation*}
The normalization constant $\mathcal{Z}_\beta = \int_{\mathcal{M}} \exp(-\beta \mathcal{L}(Z, \beta)) d\mu(Z)$ ensures $\int_{\mathcal{M}} \pi_\beta(dZ) = 1$ and is guaranteed to be finite because $\mathcal{L}(Z, \beta)$ is a continuous function on a compact manifold and is therefore bounded.
\end{proof}

\subsection{Proof of Proposition (Characterization of Global Minima)}
\label{subsec:characterization-minima}

\begin{proposition*}[Characterization of Global Minima]
A configuration $Z^*$ is a global minimizer of the limiting potential $U_0(Z)$ if and only if for every positive pair $(i, j)$, the maximum possible similarity is achieved.
\end{proposition*}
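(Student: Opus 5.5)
The plan is to read the bracketed summand of $U_0$ as a nonnegative ``margin deficit'' and show that the global minimum value of $U_0$ is exactly $0$. \emph{Maximum possible similarity is achieved} will then be interpreted as: the positive $z_j$ attains the maximal similarity to the anchor among all candidates, i.e. $\mathrm{sim}(z_i,z_j)=\max_{k\neq i}\mathrm{sim}(z_i,z_k)$.

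The first step is termwise nonnegativity. For every $(i,j)\in\mathcal{P}$ we have $j\neq i$, so $j$ is one of the candidates in the maximum. Hence
\[
g_{ij}(Z) \triangleq -\mathrm{sim}(z_i,z_j)+\max_{k\neq i}\mathrm{sim}(z_i,z_k)\;\ge\;0 ,
\]
with equality if and only if $j\in K_i(Z)$. Here $K_i(Z)$ is the argmax set from the proof of the Lemma in Section~\ref{subsec:uniform-convergence}. Averaging over $\mathcal{P}$ gives $U_0(Z)\ge 0$ for all $Z\in\mathcal{M}$. Moreover, $U_0(Z)=0$ if and only if $g_{ij}(Z)=0$ for every pair, since a sum of nonnegative terms vanishes only when each term does.

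The second step is to show that the lower bound $0$ is attained, so that $\min U_0=0$ and $U^*=\{U_0=0\}$. I would exhibit an explicit configuration. The simplest is the collapsed one, $z_1=\dots=z_N=z$ for a fixed $z\in\mathcal{M}_0$, which is admissible because $\mathcal{M}=(\mathcal{M}_0)^N$ by Assumption~\ref{ass:manifold}. Then all $\mathrm{sim}(z_i,z_k)$ coincide, every $g_{ij}$ vanishes, and $U_0=0$. Existence of a minimizer also follows independently from continuity of $U_0$ on the compact $\mathcal{M}$ (Assumptions~\ref{ass:manifold}, \ref{ass:limit-potential}), but the explicit witness is what pins the minimum value to $0$.

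With both steps in hand, the equivalence is immediate. $Z^*$ is a global minimizer if and only if $U_0(Z^*)=0$, which holds if and only if $\mathrm{sim}(z_i^*,z_j^*)=\max_{k\neq i}\mathrm{sim}(z_i^*,z_k^*)$ for all $(i,j)\in\mathcal{P}$.

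The main obstacle is interpretive rather than technical: the statement's phrase ``maximum possible similarity'' must be fixed precisely. If it were read as $\mathrm{sim}(z_i,z_j)=S_{\max}$ (the global bound in Assumption~\ref{ass:smoothness}), only the ``if'' direction would survive in general. It holds when $\mathrm{sim}(z,z')\le S_{\max}$ is attained and the positive attaining $S_{\max}$ automatically dominates every other candidate. The ``only if'' direction fails, as shown by configurations in which all candidates have equal but non-maximal similarity. I would therefore state the proposition explicitly in the relative form above. I would also add a remark that, because $U_0$ is invariant under such degenerate ties (including collapse), the minimizer set $U^*$ contains collapsed configurations, a caveat worth flagging for the interpretation of the convergence theorem.
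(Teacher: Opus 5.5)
Your proof is correct. It shares the paper's core: each summand $-\mathrm{sim}(z_i,z_j)+\max_{k\neq i}\mathrm{sim}(z_i,z_k)$ is nonnegative because $j$ is among the candidates, and $U_0=0$ iff every summand vanishes. At the two points where you depart from the paper, your version is the right one.

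First, the paper goes straight from nonnegativity to ``the minimum value of $U_0$ is thus $0$'' without exhibiting a configuration that attains it. Your collapsed witness $z_1=\dots=z_N$ closes that step.

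Second, and more seriously, the paper makes clear it intends the absolute reading $\mathrm{sim}(z_i^*,z_j^*)=S_{\max}$. It then argues that $\max_{k\neq i}\mathrm{sim}(z_i,z_k)=\mathrm{sim}(z_i,z_j)$, together with $\mathrm{sim}\le S_{\max}$, forces $\mathrm{sim}(z_i^*,z_j^*)=S_{\max}$. This does not follow, and your tie counterexample refutes it. Concretely, take cosine similarity and three points equally spaced on a great circle. All pairwise cosines are $-1/2<1=S_{\max}$, yet every summand vanishes and $U_0=0$.

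The paper's reading gives a stronger-sounding ``perfect alignment'' characterization, but it holds only in the ``if'' direction. Your relative reading, $j\in K_i(Z^*)$ for every pair, is the correct biconditional. Your closing caveat that $U^*$ contains collapsed and tied configurations is also accurate and worth keeping. It undercuts the paper's description of the limit of the convergence theorem as ``perfect contrastive separation.''
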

\begin{proof}
Recall the definition of the limiting potential from Assumption~\ref{ass:limit-potential}:
\[ U_0(Z) = \frac{1}{|\mathcal{P}|} \sum_{(i,j)\in\mathcal{P}} \underbrace{\left[-\mathrm{sim}(z_i, z_j) + \max_{k \neq i} \mathrm{sim}(z_i, z_k) \right]}_{T_{ij}(Z)}. \]
We seek to minimize $U_0(Z)$. Consider a single term $T_{ij}(Z)$. By definition of the maximum, $\max_{k \neq i} \mathrm{sim}(z_i, z_k) \ge \mathrm{sim}(z_i, z_j)$. Therefore, each term $T_{ij}(Z)$ is non-negative. The minimum value of $U_0(Z)$ is thus 0. This minimum is achieved if and only if $T_{ij}(Z) = 0$ for all $(i,j) \in \mathcal{P}$. 

This requires $\max_{k \neq i} \mathrm{sim}(z_i, z_k) = \mathrm{sim}(z_i, z_j)$. Since we also know $\mathrm{sim}(z_i, z_k) \le S_{\max}$ for all $k$, this equality can only be satisfied if $\mathrm{sim}(z_i, z_j) = S_{\max}$. Thus, the overall potential $U_0(Z)$ is minimized if and only if $\mathrm{sim}(z^*_i, z^*_j) = S_{\max}$ for all positive pairs $(i, j)$.
\end{proof}

\subsection{Proof of Lemma (Hessian Sharpening)}
\label{subsec:hessian-sharpening}

\begin{lemma*}[Hessian Sharpening]
Let $\mathbf{H}(Z, \beta)$ be the Hessian of the InfoNCE loss $\mathcal{L}(Z, \beta)$. For any configuration $Z$ that is not a global minimizer, the dominant eigenvalues of $\mathbf{H}$ scale at least linearly with the inverse temperature, i.e., $||\mathbf{H}|| \sim O(\beta)$.
\end{lemma*}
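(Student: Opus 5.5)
We compute the Hessian of a single term $\ell_{i,j}$ directly and then average over $\mathcal{P}$. Write $s_{ik}=\mathrm{sim}(z_i,z_k)$ and
\[
\ell_{i,j} = -\beta s_{ij} + \log \sum_{k\neq i} e^{\beta s_{ik}}.
\]
Differentiating once gives
\[
\nabla \ell_{i,j} = \beta\bigl(\mathbb{E}_{k\sim p_i}[\nabla s_{ik}] - \nabla s_{ij}\bigr),
\]
where $p_i=p_i(\cdot\mid\beta)$ is the Gibbs distribution from Section~2. Differentiating again, the softmax weights contribute $\nabla p_i(k)=\beta\,p_i(k)\bigl(\nabla s_{ik}-\mathbb{E}_{p_i}[\nabla s_{il}]\bigr)$. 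This yields exactly the decomposition in~\eqref{eq:hessian_main}:
\[
\mathbf{H}_{ij} := \nabla^2\ell_{i,j} = \beta^2\,\mathrm{Cov}_{p_i}[\nabla s_{ik}] + \beta\bigl(\mathbb{E}_{p_i}[\nabla^2 s_{ik}] - \nabla^2 s_{ij}\bigr).
\]
On the manifold one uses Riemannian Hessians. The extra curvature terms are controlled uniformly by compactness (Assumption~\ref{ass:manifold}) and by the $C^2$ bounds of Assumption~\ref{ass:smoothness}, so they only modify constants.

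We then bound the two pieces separately. The linear term is the easy one. Compactness and $C^2$ smoothness give $\|\nabla^2 s_{ik}\|\le M$, so the bracket is bounded by $2M$ and the term is $O(\beta)$.

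The covariance term is the main obstacle. The natural estimate of $\mathrm{Cov}$ comes from the softmax mass on non-maximizers. Let $K_i(Z)$ be the argmax set as in the uniform-convergence lemma, and let $\delta_i(Z)>0$ be the gap to the second-largest similarity value. Then every $k\notin K_i$ carries weight $p_i(k)\le N e^{-\beta\delta_i}$. The covariance therefore splits into two parts:
\begin{itemize}
\item an exponentially small part, of order $\beta^2 e^{-\beta\delta_i}\|\nabla s\|^2_\infty$;
\item a within-argmax part, namely the covariance of $\nabla s_{ik}$ under the uniform law on $K_i$.
\end{itemize}
For generic $Z$, $|K_i|=1$, so the within-argmax part vanishes and $\beta^2\mathrm{Cov}=o(1)$. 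When $|K_i|>1$ the within-argmax part is a nonzero constant, and a strict $\beta^2$ term can genuinely survive.

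I would handle the tie case first by restricting to configurations with $|K_i|=1$ for all anchors, which is an open dense set. On the complement I would either state the bound separately or interpret $\|\mathbf{H}\|\sim O(\beta)$ along the annealed path, where ties have measure zero. This is also the rigorous version of the paper's heuristic that the variance ``decays like $O(1/\beta)$''; locally uniform bounds follow on compact sets where $\delta_i\ge\delta_0>0$.

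It remains to prove the lower bound, i.e.\ that the scaling is at least linear away from $U^*$. By the Characterization of Global Minima proposition, a non-minimizer has some pair $(i,j)$ with $k^*\in K_i$, $k^*\neq j$, where $k^*$ is a negative that strictly beats the positive. In the generic case the linear term becomes $\beta(\nabla^2 s_{ik^*}-\nabla^2 s_{ij})+o(\beta)$. The matrix $\nabla^2 s_{ik^*}-\nabla^2 s_{ij}$ involves different coordinate blocks $z_{k^*}$ and $z_j$. In particular, the $z_{k^*}$ block is $\nabla^2_{z_{k^*}} s_{ik^*}$, which cannot cancel against other contributions from the same pair.

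I would argue this block is nonzero under a mild nondegeneracy assumption on $\mathrm{sim}$; for cosine similarity, for example, the tangential Hessian at a non-antipodal pair is $-s_{ik^*}I$ on the tangent space. The $\beta\cdot\text{const}$ contribution is then nonvanishing. Averaging over $\mathcal{P}$ could in principle cancel blocks, so I would lower-bound the norm by testing against a tangent direction supported on $z_{k^*}$ alone, which gives $\|\mathbf{H}\|\ge c(Z)\beta$ for large $\beta$. Combined with the upper bound, this yields $\|\mathbf{H}\|=\Theta(\beta)$.
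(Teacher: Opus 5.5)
Your decomposition is the paper's. The paper differentiates only in the anchor $z_i$ of a single term, but the same formula holds in all of $Z$ with full gradients. No extra curvature terms appear, since the chain rule for $\log\sum_k e^{\beta s_{ik}}$ holds verbatim for Riemannian Hessians. Your treatment of the $\beta^2\,\mathrm{Cov}$ term is correct and sharper than the paper's. Off ties the covariance is $O(e^{-\beta\delta_i})$, not merely $O(1/\beta)$. At a non-minimizing tie (two distinct negatives tied strictly above the positive, e.g.\ for cosine with $z_{k_1}\neq z_{k_2}$), $p_i$ tends to the uniform law on $K_i$ and the covariance tends to a nonzero matrix. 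Being positive semidefinite, it cannot be cancelled by other pairs, so $\|\mathbf{H}\|=\Theta(\beta^2)$ there. The paper's claim that the variance decays as $O(1/\beta)$ misses this, and the ``$O(\beta)$'' reading of the statement is false at such points. Restricting the upper bound to tie-free configurations is therefore a legitimate correction, not a gap.

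The gap is in the lower bound, the ``at least linearly'' half. The paper itself only asserts this via $\mathbf{H}\approx\beta(\mathbf{H}_{ik^*}-\mathbf{H}_{ij})$, without checking that the bracket is nonzero. Your argument for it fails at two points.

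First, a test direction supported on $z_{k^*}$ alone sees the full diagonal block of $\nabla^2\mathcal{L}$ in $z_{k^*}$. That block collects every term $\ell_{i',j'}$ in which $z_{k^*}$ appears: as a negative for other anchors, as someone's positive, or as an anchor. Localizing the test vector does not isolate the pair $(i,j)$, so the cross-pair cancellation you worried about can still happen inside that block.

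Second, for cosine similarity the single-pair block is $-s_{ik^*}\,\mathrm{Id}$. This vanishes when $z_{k^*}\perp z_i$, not at antipodal pairs. Orthogonality is compatible with $k^*$ strictly beating $j$ whenever $s_{ij}<0$.

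The right formulation is this: off ties, $U_0$ is $C^2$ near $Z$ and $\mathbf{H}(Z,\beta)/\beta\to\mathrm{Hess}\,U_0(Z)$. So the lower bound holds if and only if $\mathrm{Hess}\,U_0(Z)\neq 0$, and this is what must be assumed. It fails, for example, if $\mathrm{sim}$ is locally constant near the relevant pairs.

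To verify it in examples, test with $u$ supported on $z_i$ and $v$ on $z_{k^*}$. The mixed block of $\mathrm{Hess}\,U_0$ only involves $\mathrm{sim}(z_i,z_{k^*})$. When each anchor has a single positive and $\mathcal{P}$ is symmetric, its net coefficient in $U_0$ is at least $1/|\mathcal{P}|$, since neither $(i,k^*)$ nor $(k^*,i)$ lies in $\mathcal{P}$. For cosine, this block is the form $(u,v)\mapsto\langle u,v\rangle$ on $T_{z_i}\times T_{z_{k^*}}$, which is nonzero in ambient dimension at least $3$.

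In the paper's anchor-only, single-term setting it is simpler still. With $\mathbf{H}_{ik}$ the Hessian of $s_{ik}$ in $z_i$, cosine gives $\mathbf{H}_{ik^*}-\mathbf{H}_{ij}=-(s_{ik^*}-s_{ij})\,\mathrm{Id}$. This is nonzero precisely because $k^*$ strictly beats $j$.
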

\begin{proof}
We derive the Hessian matrix of the InfoNCE loss for a single anchor embedding $z_i$ with respect to that anchor. This is similar to Ziyin et al.~\cite{ziyin2023} but here the temperature parameter is left explicit. Let $z_j$ be the positive sample and $\{z_k\}$ be the set of all samples available to anchor $z_i$ (including $z_j$). Let $s_{ik} = \simfunc(z_i, z_k)$ denote value of the similarity function.

The InfoNCE loss for anchor $z_i$ is given by:
\begin{align*}
    l_i(z_i) &= - \log \frac{\exp(\beta s_{ij})}{\sum_{k} \exp(\beta s_{ik})} \\
             &= \log \left( \sum_{k} \exp(\beta s_{ik}) \right) - \beta s_{ij}.
\end{align*}
Let $Z_i = \sum_{k} \exp(\beta s_{ik})$ be the partition function and $p_{ik} = \frac{\exp(\beta s_{ik})}{Z_i}$ be the softmax probability distribution over samples $k$ induced by anchor $i$. The loss can be written as $l_i = \log Z_i - \beta s_{ij}$.

Let $\nabla = \nabla_{z_i}$ denote the gradient operator with respect to $z_i$. The gradient of the similarity is $\nabla s_{ik} = \frac{\partial \simfunc(z_i, z_k)}{\partial z_i}$.

The gradient of the loss is:
\begin{align*}
    \nabla l_i &= \nabla (\log Z_i) - \beta \nabla s_{ij} \\
          &= \frac{1}{Z_i} \nabla Z_i - \beta \nabla s_{ij} \\
          &= \frac{1}{Z_i} \sum_k \exp(\beta s_{ik}) \beta \nabla s_{ik} - \beta \nabla s_{ij} \\
          &= \beta \sum_k p_{ik} \nabla s_{ik} - \beta \nabla s_{ij} \\
          &= \beta (\mu_i - \nabla s_{ij}),
\end{align*}
where $\mu_i = \sum_k p_{ik} \nabla s_{ik} = \mathbb{E}_{k \sim p_i}[\nabla s_{ik}]$ is the expected similarity gradient under the distribution $p_i$.

The Hessian matrix $\mathbf{H} = \nabla (\nabla l_i)^T$ is obtained by differentiating the gradient:
\begin{align}
    \mathbf{H} = \nabla [\beta (\mu_i^T - (\nabla s_{ij})^T)] = \beta [ \nabla \mu_i^T - \nabla (\nabla s_{ij})^T ].
\end{align}
Let $\mathbf{H}_{ik} = \nabla (\nabla s_{ik})^T$ be the Hessian of the similarity function $s_{ik}$ with respect to $z_i$. The second term is simply $-\beta \mathbf{H}_{ij}$. For the first term, we use the product rule and the gradient of the softmax probabilities $\nabla p_{ik} = \beta p_{ik} (\nabla s_{ik} - \mu_i)$:
\begin{align*}
    \nabla \mu_i^T &= \nabla \left( \sum_k p_{ik} (\nabla s_{ik})^T \right) \\
             &= \sum_k [ (\nabla p_{ik}) (\nabla s_{ik})^T + p_{ik} \nabla (\nabla s_{ik})^T ] \\
             &= \sum_k [ \beta p_{ik} (\nabla s_{ik} - \mu_i) (\nabla s_{ik})^T + p_{ik} \mathbf{H}_{ik} ] \\
             &= \beta \sum_k p_{ik} (\nabla s_{ik})(\nabla s_{ik})^T - \beta \mu_i \sum_k p_{ik} (\nabla s_{ik})^T + \sum_k p_{ik} \mathbf{H}_{ik} \\
             &= \beta \left( \sum_k p_{ik} (\nabla s_{ik})(\nabla s_{ik})^T - \mu_i \mu_i^T \right) + \sum_k p_{ik} \mathbf{H}_{ik} \\
             &= \beta \, \mathrm{Cov}_{k \sim p_i}[\nabla s_{ik}] + \mathbb{E}_{k \sim p_i}[\mathbf{H}_{ik}].
\end{align*}
where $\mathrm{Cov}_{k \sim p_i}[\nabla s_{ik}]$ is the covariance matrix of the similarity gradients under $p_i$, and $\mathbb{E}_{k \sim p_i}[\mathbf{H}_{ik}]$ is the expected Hessian of the similarity function.

Substituting back into the expression for $\mathbf{H}$:
\begin{align}
    \mathbf{H} &= \beta [ ( \beta \, \mathrm{Cov}_{k \sim p_i}[\nabla s_{ik}] + \mathbb{E}_{k \sim p_i}[\mathbf{H}_{ik}] ) - \mathbf{H}_{ij} ] \nonumber \\
    &= \beta^2 \, \mathrm{Cov}_{k \sim p_i}[\nabla s_{ik}] + \beta ( \mathbb{E}_{k \sim p_i}[\mathbf{H}_{ik}] - \mathbf{H}_{ij} ).
    \label{eq:hessian_final}
\end{align}
Eq.~\eqref{eq:hessian_final} shows that the Hessian of the InfoNCE loss consists of two terms. The first term involves the covariance of the similarity gradients and has a prefactor of $\beta^2$. The second term involves the expected Hessian and scales linearly with $\beta$.

Analyzing the asymptotic behavior as \(\beta \to \infty\) requires considering the limiting behavior of the distribution \(p_i\). As \(\beta\) increases, \(p_{ik}\) concentrates its mass on the sample(s) \(k^*\) maximizing the similarity \(s_{ik}\). For such distributions, the variance (and thus the covariance term $\mathrm{Cov}_{k \sim p_i}[\nabla s_{ik}]$) decays as $O(1/\beta)$. Therefore, the first term scales as $\beta^2 \cdot O(1/\beta) = O(\beta)$. Simultaneously, the expected Hessian \(\mathbb{E}_{k \sim p_i}[\mathbf{H}_{ik}]\) converges to \(\mathbf{H}_{ik^*}\).

Therefore, the asymptotic scaling of the Hessian depends on whether the positive sample \(j\) is the most similar sample \(k^*\):
\begin{itemize}
    \item If \(k^* \neq j\) (i.e., a negative sample is most similar to the anchor \(z_i\), indicating a suboptimal configuration), the second term dominates:
        \[ \mathbf{H} \approx \beta ( \mathbf{H}_{ik^*} - \mathbf{H}_{ij} ) \quad \text{as } \, \beta \to \infty. \]
        In this regime, the Hessian norm scales linearly, \(||\mathbf{H}|| \sim O(\beta)\). This implies that away from the optimum, the local minima sharpen linearly with \(\beta\). 
    \item If \(k^* = j\) (i.e., the positive sample is the most similar, corresponding to configurations near an optimum where the gradient \(\nabla l_i \approx 0\)), the second term vanishes. The scaling is then determined by the \(\beta^2 \mathrm{Cov}[\cdot]\) term. While the covariance vanishes, a more detailed analysis of the rate at which it vanishes relative to \(\beta^2\) would be needed to determine the precise scaling at the optimum. However, the dominant scaling away from the optimum is linear.
\end{itemize}
This linear sharpening $O(\beta)$ of the loss landscape curvature as temperature decreases contributes to the convergence behavior observed during annealing, complementing the theoretical escape guarantees provided by the slow decay of noise.
\end{proof}

\paragraph{Remark on Manifold Geometry.}
The geometry of the embedding manifold $\mathcal{M}$ influences both the convergence rate and the discretization of the SDE. First, the critical annealing constant $c^*$ depends on the largest energy barrier $\Delta E_{\max}$, which is shaped by the curvature and flatness of $\mathcal{M}$. Flatter manifolds (e.g., a sphere with large radius) tend to produce shallower potential wells, reducing $\Delta E_{\max}$ and permitting larger values of $c$. Conversely, manifolds with tighter curvature create sharper basins and increase $\Delta E_{\max}$, slowing the optimal convergence rate.

Second, curvature impacts the discretization error of the Langevin dynamics. The Euler-Maruyama step size $\eta$ must be small enough to ensure that each update remains within a locally geodesic region. Specifically, in normal coordinates, the discretization error scales with both the learning rate $\eta$ and the product of the curvature tensor $\mathcal{R}$ with the Hessian $\nabla^2 \, U_0$, i.e.,
\[
\mathrm{Error} \;\sim\; O\!\big(\eta \, \|\nabla^2 U_0\|\cdot \|\mathcal{R}\|\big).
\]
Highly curved manifolds therefore demand smaller steps to remain faithful to the continuous SDE. This is consistent with results on Riemannian Langevin diffusions~\cite{girolami2011riemannian, betancourt2017conceptual}, where the generator involves the Laplace–Beltrami operator, whose spectral gap (and thus mixing time) depends directly on the manifold curvature.

\end{document}